\def\eqref#1{equation~\ref{#1}}
\def\1{\bm{1}}
\DeclareMathAlphabet{\mathsfit}{\encodingdefault}{\sfdefault}{m}{sl}
\SetMathAlphabet{\mathsfit}{bold}{\encodingdefault}{\sfdefault}{bx}{n}
\newcommand{\replace}[2]{{#2}}
\declaretheorem[name=Theorem, numberwithin=section]{theorem}
\newtheorem{definition}{Definition}
\title{Segmental Advantage Estimation: Enhancing PPO for Long-Context LLM Training}
\author{
Xue Gong$^{1,}$\thanks{\ Equal contribution.}~~, Qi Yi$^{1,*}$, Ziyuan Nan$^{2,4}$, Guanhua Huang$^1$, Kejiao Li$^1$, Yuhao Jiang$^1$, \\ 
Ruibin Xiong$^1$, Zenan Xu$^1$, Jiaming Guo$^4$, Shaohui Peng $^{2,3}$, Bo Zhou${^1}$\thanks{\ Correspondence to Bo Zhou: chaysezhou@tencent.com.} \\
\vspace{2mm}
\textbf{$^1$LLM Department, Tencent} \quad \textbf{$^2$ University of Chinese Academy of Sciences} \\
\textbf{$^3$ Intelligent Software Research Center, Institute of Software, CAS} \\
\textbf{$^4$ State Key Lab of Processors, Institute of Computing Technology, CAS}
}
\newcommand{\alg}{\textsc{SAE}}
\begin{document}

\maketitle

\begin{abstract}
Training Large Language Models (LLMs) for reasoning tasks is increasingly driven by Reinforcement Learning with Verifiable Rewards (RLVR), where Proximal Policy Optimization (PPO) provides a principled framework for stable policy updates. However, the practical application of PPO is hindered by unreliable advantage estimation in the sparse-reward RLVR regime.
This issue arises because the sparse rewards in RLVR lead to inaccurate intermediate value predictions, which in turn introduce significant bias when aggregated at every token by Generalized Advantage Estimation (GAE).
To address this, we introduce Segmental Advantage Estimation (SAE), which mitigates the bias that GAE can incur in RLVR.
Our key insight is that aggregating $n$-step advantages at every token(as in GAE) is unnecessary and often introduces excessive bias, since individual tokens carry minimal information.
Instead, SAE first partitions the generated sequence into coherent sub-segments using low-probability tokens as heuristic boundaries. It then selectively computes variance-reduced advantage estimates only from these information-rich segment transitions, effectively filtering out noise from intermediate tokens. 
Our experiments demonstrate that SAE achieves superior performance, with marked improvements in final scores, training stability, and sample efficiency. 
These gains are shown to be consistent across multiple model sizes, and a correlation analysis confirms that our proposed advantage estimator achieves a higher correlation with an approximate ground-truth advantage, justifying its superior performance.
\end{abstract}

% the standard Generalized Advantage Estimation (GAE) algorithm amplifies estimation bias through its fine-grained, token-wise bootstrapping process, which relies on inherently unreliable value predictions and compromises the quality of policy updates in long-horizon tasks. 
\section{Introduction}
% Recent advances in large language model (LLM) training have been predominantly driven by Reinforcement Learning with Verifiable Rewards (RLVR), which leverages automated verifiers to provide binary feedback on objectively measurable tasks. 
% Within this paradigm, Group Relative Policy Optimization (GRPO) has emerged as the leading approach, streamlining the reinforcement learning process by comparing responses within groups to derive advantages, thus eliminating the computational overhead of critic networks.
% While GRPO offers computational efficiency, it falls short in capturing the nuanced contribution of individual reasoning steps within a response, resulting in suboptimal credit assignment and reduced training efficiency. 
% Proximal Policy Optimization (PPO), with its advantage estimation framework, presents a more principled alternative that could deliver fine-grained training signals essential for effective reinforcement learning.

Recent advances in large language model (LLM) training for reasoning tasks have been  increasingly  driven by Reinforcement Learning with Verifiable Rewards (RLVR), which leverages automated verifiers to provide binary feedback on objectively measurable tasks. In this context, Group Relative Policy Optimization (GRPO) \citep{grpo} has been favored for reducing system complexity and easing implementation. 
However, this simplicity sacrifices fine‑grained credit assignment over individual reasoning steps.
In contrast, Proximal Policy Optimization (PPO) provides a more principled advantage estimation framework that can supply fine-grained training signals \citep{orz, vapo} via a learned critic model.

%Despite the theoretical promise, the practical application of PPO in the RLVR regime is hindered by persistent difficulty in reliable advantage estimation \citep{vineppo}.
%Because rewards in RLVR are sparse, intermediate value predictions are often unreliable \cite{vineppo}. 
Despite its theoretical promise, the practical application of PPO in RLVR is hindered by unreliable advantage estimation. As noted by \citet{vineppo}, this difficulty stems from the sparse rewards inherent to the RLVR regime, which render intermediate value predictions highly unreliable. This creates difficulties for Generalized Advantage Estimation (GAE) \citep{gae} used in PPO, which aggregates exponentially discounted mixtures of per‑token n‑step advantages.
Specifically, GAE depends on value predictions at every token position; and in RLVR where these predictions are inherently unreliable, this fine‑grained token-level bootstrapping process may introduce more estimation bias than useful training signal.
Consequently, such bias-amplifying token-wise advantage aggregation mechanism of GAE may compromise the quality of policy updates in long-horizon reasoning tasks.

Existing attempts to solve the above challenges present certain limitations.
One common solution \citep{orz} involves setting $\lambda=1$ to obtain unbiased Monte Carlo estimates, but this sacrifices both GAE's core advantage of variance reduction and its potential for effective credit assignment, resulting in unstable gradient estimates and suboptimal learning dynamics.
Another approach \citep{vapo} increases $\lambda$ values for longer sequences to better handle heterogeneous response lengths, yet this requires extensive hyperparameter tuning and still suffers from the inaccurate value estimates.
Although these adaptations report empirical gains, they operate within GAE's framework and leave intact GAE’s bias-amplifying token‑level advantage aggregation, limiting the quality of estimated advantages.

To mitigate the excessive bias introduced by GAE, our key insight is that the majority of token-level advantage bootstrapping operations are unnecessary and can even be harmful. 
Individual tokens often carry minimal information and contribute little effective guiding signal, yet their inclusion in the bootstrapping process introduces substantial estimation bias. 
This suggests an intuitive solution: strategically selecting a sparse subset of representative $n$-step advantages for estimation, rather than indiscriminately aggregating them at every token position. 
To achieve this, we notice that the content generated by LLMs can be organized into semantically coherent segments, such as mathematical sub-derivations or logical subclaims. 
This structure provides a natural mechanism for our approach: instead of applying GAE at the token level, we restrict advantage computation to the semantic transition boundaries where the underlying reasoning advances. By doing so, we significantly reduce the number of low-information bootstrap points, thereby lowering bootstrapping bias and providing a more stable foundation for PPO-style policy optimization in RLVR.

Building on these insights, we introduce Segmental Advantage Estimation (\alg{}) to address the challenges of long-sequence advantage estimation in RLVR reasoning tasks.
We first propose a heuristic method to partition the entire response sequence into semantically coherent segments, which identifies low-probability tokens as segment boundaries.
This heuristic is motivated by the observation that within a continuous reasoning segment, token generation is highly predictable (i.e. with high probability); conversely, transitions between distinct reasoning steps force the model to navigate higher uncertainty, which naturally manifests as the generation of high-surprisal, low-probability tokens that serve as effective semantic boundaries.
When aggregating the mixture of n-step advantages to obtain a variance-reduced advantage estimate, our method selectively includes only the estimates originating from segment boundaries instead of every token as GAE does.
This design effectively filters out the noisy value predictions from intermediate tokens within each segment, ensuring that the final advantage estimation is built upon a foundation of more reliable and informative states.

We present an extensive empirical evaluation of our proposed method within the domain of mathematical problem-solving. Our approach is benchmarked against several strong baselines across four out-of-distribution test sets (AIME'24, AIME'25, AMC, and BeyondAIME). 
The experimental results demonstrate that our method achieves superior performance, exhibiting marked improvements in final scores, training stability, and sample efficiency over other baselines. 
An ablation study on model size further establishes that these gains are consistent across 4B/8B/14B models, demonstrating the broad applicability of our method. 
Finally, we provide a direct justification for our method's effectiveness through a correlation analysis, which reveals that our proposed advantage estimator achieves the highest correlation with an approximate ground-truth advantage, thereby explaining its superior performance.

\section{Related Works}

\paragraph{Reinforcement Learning with Verifiable Rewards}

Recent work on reinforcement learning with verifiable rewards (RLVR) has yielded substantial gains in LLM reasoning, especially in mathematics and programming where correctness provides reliable signals. \replace{}{Some works \cite{alphamath, mcts2} utilize searching method such as MCTS to construct examples with positive rewards which are then fed to the LLMs to enhance its ability.
% Previous works whether utilize a seaching
% OpenAI’s O1 \citep{o1} exemplifies this trend by performing extended internal reasoning before producing an answer. 
% Recently, DeepSeek R1 \citep{deepseek2025r1} has demonstrated that the reasoning ability of LLMs can be greatly enhanced via reinforcement learning, opening a 
Recent breakthroughs, exemplified by DeepSeek R1 \citep{deepseek2025r1}, have proven that the exclusive use of reinforcement learning (RL) can considerably push the ceiling of reasoning abilities, a development that has greatly catalyzed the recent enhancement of reasoning in LLMs.
}
% DeepSeek R1 \citep{deepseek2025r1} further advances the paradigm by open-sourcing model weights with performance comparable to O1; subsequent models, including Kimi k1.5 \citep{kimi15} and Qwen3 \citep{qwen3}, aim to match or exceed this capability.
R1 \citep{deepseek2025r1} also introduces a “zero RL” paradigm that elicits reasoning directly from a base model without additional supervised finetuning. 
% Methodologically, several studies refine the RLVR toolbox: 
The success of R1 has motivated many variants of GRPO:
DAPO \citep{dapo} surfaces failure modes such as entropy collapse and proposes four mitigations;
REINFORCE++ \citep{hu2025reinforce++} employs unbiased global advantage normalization to improve stability; GSPO \citep{GSPO} replaces token-wise clipped importance ratios with sequence-level clipping, improving MoE training stability; and LitePPO \citep{litePPO} demonstrates that a minimalist combination of two techniques unlock the learning capability of models with vanilla PPO loss. Despite the simplicity of value-model-free approaches (GRPO and its variants), recent studies report that PPO can achieve superior performance \citep{vapo,orz}, likely due to its more granular credit assignment. Value-model-based Approach therefore remains a promising direction with greater upside for RLVR tasks.

\paragraph{Value-Model-Based Approaches in RLVR}
Recent value-model-based RLVR work for long-horizon chain-of-thought reasoning focuses on robust advantage estimation, mitigation of value-model bias, and effective credit assignment. Open-Reasoner-Zero \citep{orz} demonstrates that a minimalist setup—vanilla PPO with GAE($\lambda$=1, $\gamma$=1), simple rule-based rewards, and no KL regularization—scales performance and response length. 
\replace{VAPO \citep{vapo} mitigates value-model bias, heterogeneous sequence lengths, and reward sparsity with an integrated design for value-based training.}{VAPO \citep{vapo} increases $\lambda$ values of GAE for longer sequences to better handle heterogeneous response lengths and reduce the bias brought by the value model.} 
VC-PPO \citep{vcppo} addresses value initialization bias and reward signal decay by pretraining the value function and decoupling GAE computation between the actor and critic. T-PPO \citep{t-ppo} introduces Extended GAE to compute advantages from incomplete responses while maintaining policy stability. 
Although these methods have been shown to be effective in their respective domains, they typically inherit the token-level advantage bootstrapping of standard GAE and are therefore orthogonal to our approach.

% \paragraph{Step-Level Credit Assignment in Reasoning Tasks}

% Although these methods have been shown to be effective in their respective domains, they typically inherit the token-level advantage bootstrapping of standard GAE, therefore 
% Although these works has been domenstrated effective in some domains, they typically 
% Complementary value-centric approaches include MA-PPO \citep{marlhf}, which integrates macro actions into critic-guided PPO training to shorten temporal credit assignment. 
% We introduce Segmental Advantage Estimation (SAE) to address high GAE bias in PPO for long-horizon reasoning under RLVR. SAE replaces token-level bootstrapping with segment-level estimation to reduce bias and improve credit assignment, yielding consistent gains over PPO variants on long-horizon reasoning benchmarks.

\section{Preliminary}
\subsection{Proximal Policy Optimization (PPO)}
% PPO is a policy gradient method that stabilizes training via a clipped surrogate objective, constraining policy updates to avoid excessive deviations. Its core objective combines reward maximization with policy constraints through:
Proximal Policy Optimization (PPO) is a state-of-the-art policy gradient method in reinforcement learning developed by \citet{ppo}, whose surrogate optimization objective can be formatted as:

\begin{equation}
\mathcal{L}^{PPO}(\theta) = \mathbb{E}_{x \sim \mathcal{D}} \left[ \sum_{t=1}^{T} \min \left( r_t(\theta)A_t^{\text{GAE}}, \text{clip}(r_t(\theta), 1-\epsilon, 1+\epsilon)\hat{A}_t \right) \right]
\label{eq:ppo_loss}
\end{equation}

Here, $T$ is the sequence length, $A_t^{\text{GAE}}$ is the GAE estimator, $r_t(\theta) = \frac{\pi_\theta(a_t|s_t)}{\pi_{\text{old}}(a_t|s_t)}$ denotes the importance ratio, which quantifies the policy change probability for action $a_t$ in state $s_t$ between the current policy $\pi_\theta$ and the behavior policy $\pi_{\text{old}}$. 

\subsection{Problem Setting for RLVR}
\label{sec:prim_rlvr_setting}
Since we are mainly interested in LLM reasoning tasks with outcome rewards, therefore in this paper we assume the reward is assigned to the final token of generated response:

\begin{equation}r_t = \begin{cases}
0, & t < T \\
\mathbb{I}[\text{correct}], & t = T
\label{eq:sparse_rew}
\end{cases}.\end{equation}

Since the reward is particularly sparse, we also set the reward discount factor $\gamma=1$ in the remaining paper, which is the common practice in LLM community \citep{vapo, orz}.
Under these conditions, the temporal difference error becomes:
\begin{equation}
\delta_t = r_t + V(s_{t+1}) - V(s_t) = \begin{cases}
V(s_{t+1}) - V(s_t), & t < T-1 \\
\mathbb{I}[\text{correct}] - V(s_{T-1}), & t = T-1
\end{cases}
\label{eq:simplified_delta}
\end{equation}
For simplicity, we assume $V(S_T) = \mathbb{I}[\text{correct}]$. GAE is defined as the exponentially-weighted average of $l$-step advantage estimators $A_t^{(l)}:=V(S_{t+l})-V(s_t)$, and can be simplified to the weighted sum of $\delta_t$ (defined in Eq.\ref{eq:simplified_delta}):
\begin{align}
A_t^{\text{GAE}}(\lambda) &:= (1-\lambda)\left(A_t^{(1)} + \lambda A_t^{(2)} + \lambda^2 A_t^{(3)} + \ldots\right)  =\sum_{l=0}^{T-t} \lambda^l \delta_{t+l}.\label{eq:gae}
\end{align}

\begin{figure}[t]
    \centering
    \begin{subfigure}[b]{\textwidth}
        \centering
        \includegraphics[width=0.8\textwidth]{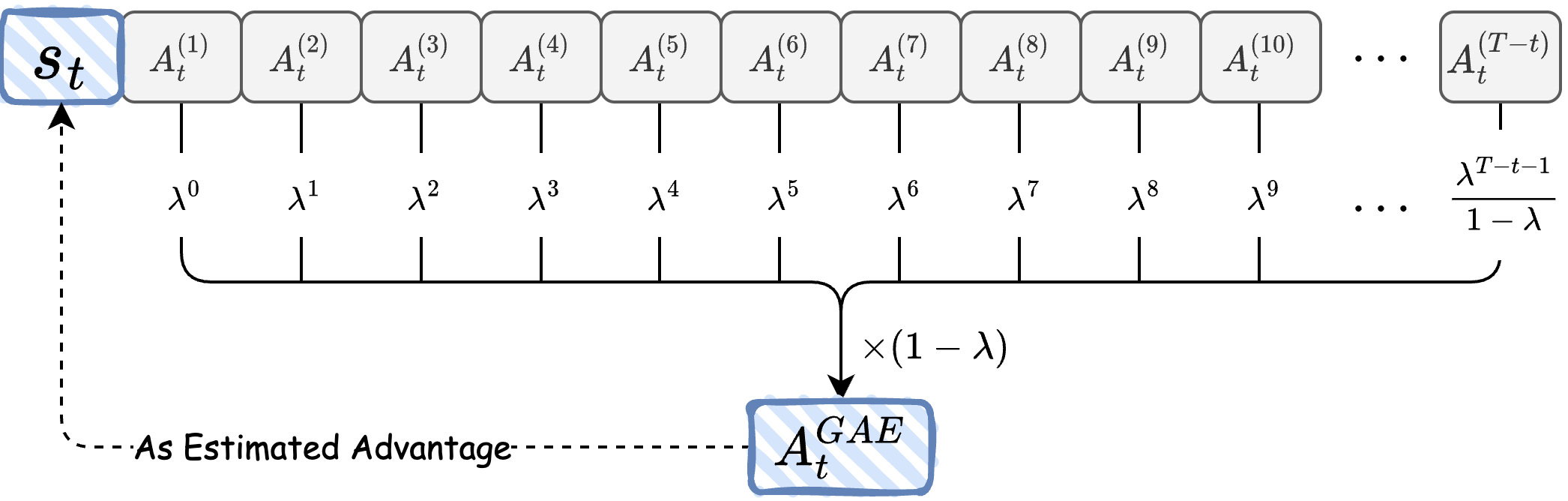}
        \caption{GAE: token-level advantage boostrapping.}
    \end{subfigure}
    \hfill
    \hfill
    \vspace{0.5cm}
    \begin{subfigure}[b]{\textwidth}
        \centering
        \includegraphics[width=0.8\textwidth]{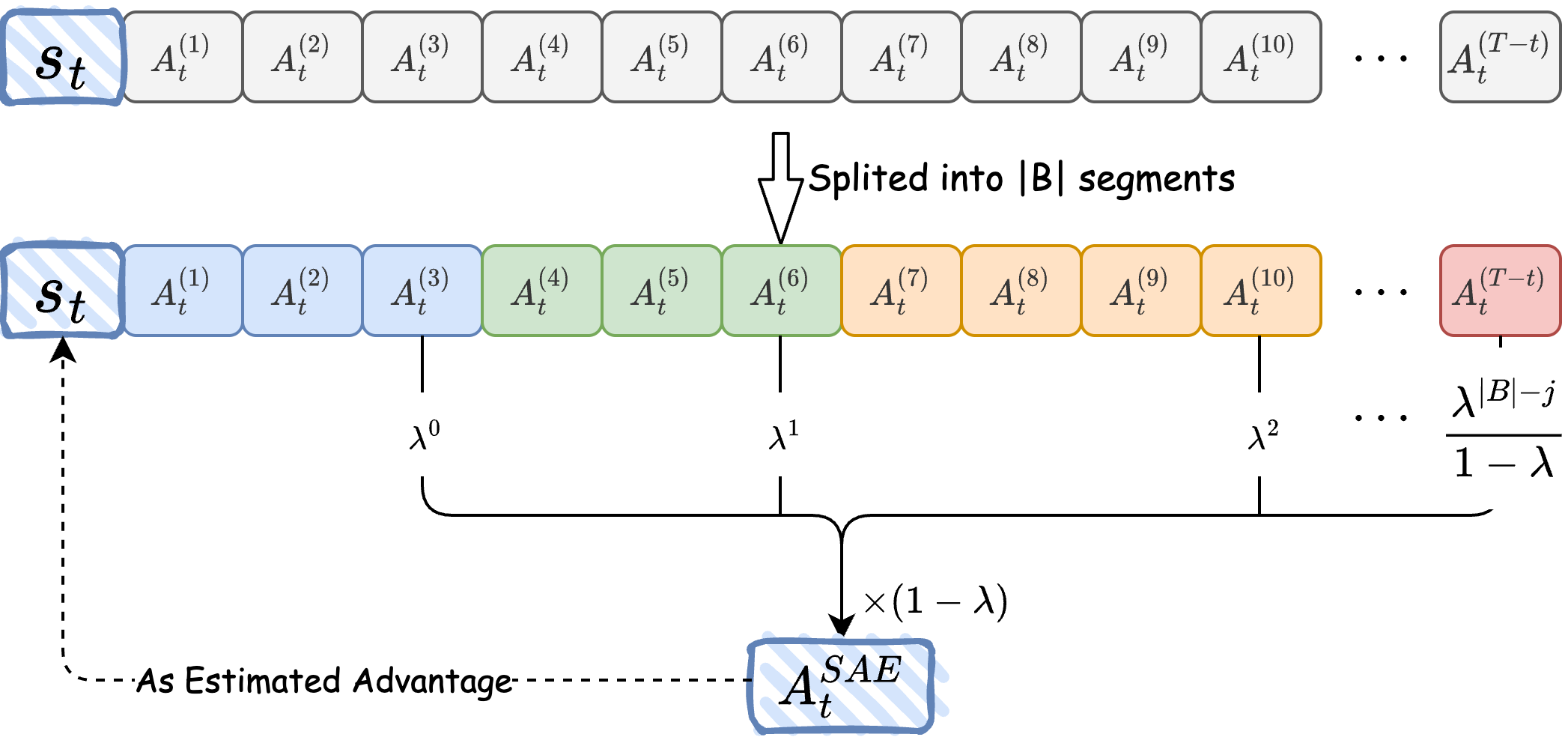}
        \caption{SAE: Segment-level Advantage Boostrapping.}
    \label{fig:sketch.sae}
    \end{subfigure}
    \caption{The sketch of GAE and SAE. Instead of bootstrapping at
every token as GAE does, SAE first partitions the sequence into semantically coherent segments, and then computes advantage estimators only at the boundaries of these segments.}
    \label{fig:sketch}
\end{figure}

\section{Methods}

% Advantage estimation lies at the heart of PPO's policy update mechanism, providing fine-grained, token-level credit assignment by leveraging a learned value function. 
% To manage the critical bias-variance tradeoff in this process, PPO typically employs Generalized Advantage Estimation (GAE), as formulated in Eq. \ref{eq:gae}. 
% This approach constructs the advantage signal by creating an exponentially-weighted sum of n-step advantage estimators, which in theory, should provide a robust learning signal.
% However, the efficacy of GAE is undermined in the sparse-reward RLVR setting. 
% The value function, trained on a single terminal reward with a long reasoning horizon $T$, provides inherently high-variance and biased estimates of a state's quality. 
% GAE's token-level formulation, which constructs advantage estimators ($A_t^{(n)} = V(s_{t+n}) - V(s_t)$) at every position, systematically amplifies this estimation bias. 
% Consequently, GAE struggles to effectively distinguish the useful credit assignment signal from the value function's inherent noise, and thus may fail to provide a robust and reliable gradient for policy optimization.

In order to provide better advantage estimation for RLVR tasks, we introduce Segmental Advantage Estimation (SAE). 
The difference of GAE and \alg{} is illustrated in Figure~\ref{fig:sketch}. 
Instead of bootstrapping at every token as GAE does, SAE first partitions the sequence into a small number of semantically coherent segments, and then computes advantage estimators only at the boundaries of these segments.
This design directly confronts the bias amplification problem by drastically reducing the number of noisy, value-based estimators in the GAE sum. By anchoring the advantage calculation to a sparse set of key states, SAE produces a more stable and reliable training signal.

In this section, we first describe our probability-based segmentation method and the segment-level advantage estimation in Section~\ref{sec:sae_method}. 
We further provide a theoretical analysis on the benefits of \alg{}'s segment-level boostrapping in Section \ref{sec:thm_ana}.

\subsection{Segmental Advantage Estimation}
\label{sec:sae_method}
\subsubsection{Probability-based Segmentation}

The efficacy of SAE is critically contingent upon its strategy for sequence segmentation. 
A well-chosen set of segmentation points, marking significant semantic shifts, is essential for isolating a high-quality advantage signal. 
To ensure the generalizability of our approach, we deliberately avoid task-specific heuristics, such as keywords matching (\textit{wait, step} or \textit{\textbackslash n}).  Instead, we propose a method that partitions the sequence by leveraging intrinsic features of the model's own output tokens.

Our core intuition is that the model's confidence in its own predictions serves as a powerful indicator of semantic structure. When the model generates a sequence of tokens that are highly probable given the context (e.g., completing a common phrase), it is operating within a coherent semantic segment. 
Conversely, a token generated with very low probability represents a point of high "surprise" for the model, which potentially marks a meaningful transition in the generated text, such as the introduction of a new argument or a shift in reasoning. These are precisely the semantic boundaries we wish to identify.

Building on the insight above, we define a content-dependent binary segmentation function $f_s: \{1, 2, \ldots, T\} \rightarrow \{0, 1\}$ for a given response sequence $s = (s_1, s_2, \ldots, s_T)$, which identifies semantic boundaries as tokens with generation probability below threshold $p$:
\begin{equation}
f_s(t) = \begin{cases}
1 & \text{if } P_{\text{model}}(s_t | s_{<t}) < p, \\
0 & \text{otherwise},
\end{cases}
\label{eq:boundary_function}
\end{equation}
where $p$ is a threshold hyperparameter that controls the granularity of the segmentation. 

\subsubsection{Advantage Estimation}

\paragraph{Selective Advantage Integration}

Rather than computing advantages at every token position, SAE selectively incorporates advantages computed at segment boundaries, as shown in Figure~\ref{fig:sketch.sae}.
Let $\mathcal{B} = \{t : f_s(t) = 1\} \cup \{T\}$ denote the set of segment boundary positions, including the terminal position $T$. We represent this set as an ordered sequence $\mathcal{B} = \{t_k\}_{k=1}^{|\mathcal{B}|}$, where $t_k < t_{k+1}$ for all $k$.

The core insight is to construct a GAE-style weighted combination using only these strategically selected advantage estimates:
\begin{equation}
A_t^{\text{SAE}} = (1-\lambda) \sum_{k: t_k > t, t_k\in \mathcal{B},t_k < T} \lambda^{k-j} A_{t}^{(t_k-t)}  + \lambda^{|B|-j} A_{t}^{(T-t)},
\end{equation}
where $j:=\min \{k: t_k> t,t_k\in \mathcal{B}\}$ is the index such that $t_j$ is the segment boundary immediately after position $t$, and $A_t^{(t_k-t)}:=V(s_{t_k}) - V(s_t)$ is the $(t_k-t)$-step boostraping advantage estimate.

\paragraph{Reformulation via Temporal Difference Errors}

To derive a computationally efficient form, we express the selected advantage estimates in terms of temporal difference errors. For any segment boundary $t_k \in \mathcal{B}$, the $(t_k-t)$-step advantage can be written as:
$A_{t}^{(t_k-t)} = \sum_{i=0}^{t_k-t-1} \delta_{t + i}$.
Substituting this into our SAE formulation and rearranging terms, we obtain:
\begin{equation}
A_t^{\text{SAE}} = (1-\lambda) \sum_{k: t_k > t, t_k < T} \lambda^{k-j} \sum_{i=0}^{t_k-t-1} \delta_{t + i} + \lambda^{|\mathcal{B}|-j} \sum_{i=0}^{T-t-1} \delta_{t + i}.
\label{eq:sae_gae_1}
\end{equation}

By changing the order of summation, this can be simplified to:
\begin{equation}
A_t^{\text{SAE}} = \sum_{l=0}^{T-t-1} \left(\prod_{i=0}^{l-1} \lambda_{\text{SAE}}(t+i)\right) \delta_{t+l},
\label{eq:sae_gae_2}
\end{equation}
where $\lambda_{\text{SAE}}(t)$ is a adaptive decay parameter, which implements segment-aware weighting:
\begin{equation}
\lambda_{\text{SAE}}(t) = \begin{cases}
1 & \text{if } f_s(t+1) = 0 \text{ (intra-segment)}, \\
\lambda & \text{if } f_s(t+1) = 1 \text{ (cross-segment)}.
\end{cases}
\label{eq:lambda_sae}
\end{equation}
The detailed proof from Eq.\ref{eq:sae_gae_1} to Eq.\ref{eq:sae_gae_2} is provided as Theorem\ref{thm:sae_equal} in the Appendix.
Actually, Eq.\ref{eq:sae_gae_2} is equivalent to a GAE-like formulation where the discount factor is adaptively applied only at segment boundaries:  temporal difference errors within the same semantic segment receive equal weighting (no decay), while errors across segment boundaries are subject to exponential decay  $\lambda$.

\paragraph{Recursive Formulation}

Finally, the SAE advantage admits a computationally efficient recursive formulation:
\begin{equation}
A_t^{\text{SAE}} = \delta_t + \lambda_{\text{SAE}}(t) \cdot A_{t+1}^{\text{SAE}},
\label{eq:sae_recurssive}
\end{equation}
where $ \lambda_{\text{SAE}}(t)$ is defined in Eq.\ref{eq:lambda_sae}.

The recursive form Eq.\ref{eq:sae_recurssive} reveals that SAE retains the computational elegance of GAE. It can be seamlessly integrated into existing PPO implementations by simply making the decay factor conditional, as shown in Eq. \ref{eq:lambda_sae}. Thus, our method provides a more robust advantage signal without incurring significant computational overhead, offering a powerful yet practical enhancement.

% This formulation preserves the computational efficiency of GAE while addressing its fundamental limitations: 
% (1) the exponential decay problem is mitigated since the maximal decaying factor becomes $\lambda^K$ rather than $\lambda^T$, and (2) the bootstrapping efficiency is improved by focusing computation on semantically meaningful boundaries rather than every token position.

% \note{The equivalance of Eq.\ref{eq:sae_gae_1}~Eq.\ref{eq:sae_gae_2}~Eq.\ref{eq:lambda_sae} is shown in \url{https://claude.ai/public/artifacts/9953fe93-2800-4723-9748-24995b050198}}

\subsection{Theoretical Analysis}
\label{sec:thm_ana}
% While the recursive formulation of \alg{} highlights its computational elegance, its primary motivation is to reduce the estimation bias amplified by GAE in sparse-reward settings. 
In the previous sections, we introduced \alg{} with the core intuition that replacing token-level bootstrapping with segment-level bootstrapping can mitigate the propagation of noise from the learned value function. 
To formally ground this intuition, we now provide a theoretical analysis that quantifies the relationship between segmentation and bias.
Our goal is to demonstrate that by reducing the frequency of bootstrapping through segmentation, \alg{} can achieve a tighter bias bound compared to traditional token-level GAE.

While our practical method employs a dynamic, probability-based segmentation, for the purpose of a tractable theoretical proof, we analyze the case of a uniform segmentation strategy. 
This simplification allows us to derive a clear, interpretable relationship between the average segment length ($M$) and an upper bound on the estimation bias.
The following theorem presents our theoretical result:

\begin{restatable}{theorem}{maintheorem}(Upper bound of bias for \alg{} under uniform segmentation)
\label{thm:sae_bias_bound}
Consider the SAE advantage estimation at step $t=0$ as defined in Eq.~\ref{eq:sae_gae_2}:
\begin{equation}
A_0^{\text{SAE}} = \sum_{l=0}^{T} \left(\prod_{i=0}^{l-1} \lambda_{\text{SAE}}(i)\right) \delta_{l},
\end{equation}
where $\lambda_{\text{SAE}}(t)$ is given by Eq.~\ref{eq:lambda_sae}.

We make the following assumptions:
\begin{enumerate}
\item (Uniform segmentation) To facilitate a tractable theoretical analysis, we assume a simplified, uniform segmentation model defined as:
\begin{equation}
f_s(t) = \begin{cases}
1 & \text{if } t \equiv 0 \pmod{M} \\
0 & \text{otherwise}.
\end{cases}
\label{eq:uniform_segmentation}
\end{equation}
This function places boundaries at regular intervals, creating segments of a fixed length $M$. We also assume the segment length $M$ divides the horizon $T$ exactly, i.e., $T = nM$ for some integer $n$.
% We apply the uniform segmentaiton defined in Eq.\ref{eq:uniform_segmentation}, and assume the segment length $M$ divides $T$ exactly, i.e., $T = nM$ for some integer $n$.
\item (Value function approximation): $V(s_t) = V^*(s_t) + \varepsilon(s_t)$ where $V^*(s_t)$ is the ground-truth value function and $\varepsilon(s_t)$ is the error term.  we also assume that $|\varepsilon(s_t)| \leq \alpha \exp\left(\frac{T-t}{\beta}\right)$ with $\alpha > 0, \beta > 0$, which models a common scenario where the value function's approximation error grows for states further away from the end of the trajectory.
\end{enumerate}
Then the bias of SAE satisfies:
\begin{equation}
\left|\text{bias}_{A_0^{\text{SAE}}}\right| 
\leq \alpha \exp\left(\frac{T}{\beta}\right) \left[1 + \frac{1-\lambda}{\exp\left(\frac{M}{\beta}\right) - \lambda}\right].
\end{equation}
Consequently, the bias upper bound is inversely related to M: larger values of M yield tighter bounds.
\end{restatable}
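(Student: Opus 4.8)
The plan is to collapse the adaptive weights into a closed form, reduce the bias to a weighted telescoping sum of value-function errors, and then control that sum by summation by parts followed by a geometric-series estimate.

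First I would simplify the product of decay factors. Under the uniform segmentation of Eq.~\ref{eq:uniform_segmentation}, the condition $f_s(t+1)=1$ holds exactly when $t\equiv M-1\pmod M$, so by Eq.~\ref{eq:lambda_sae} we have $\lambda_{\text{SAE}}(i)=\lambda$ on precisely those indices and $\lambda_{\text{SAE}}(i)=1$ otherwise. Counting how many indices of the form $i\equiv M-1\pmod M$ lie in $\{0,\dots,l-1\}$ gives $\prod_{i=0}^{l-1}\lambda_{\text{SAE}}(i)=\lambda^{\lfloor l/M\rfloor}$. Writing $w_l:=\lambda^{\lfloor l/M\rfloor}$, the estimator becomes $A_0^{\text{SAE}}=\sum_{l} w_l\,\delta_l$, summed over the defined TD errors $l=0,\dots,T-1$.

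Next I would isolate the bias, interpreted as the error induced by the value-function approximation, i.e. the difference between $A_0^{\text{SAE}}$ evaluated with the learned $V$ and with the exact $V^*$. Substituting $V=V^*+\varepsilon$ into $\delta_t=r_t+V(s_{t+1})-V(s_t)$ (with $\gamma=1$, as in Eq.~\ref{eq:simplified_delta}) yields $\delta_l-\delta_l^{*}=\varepsilon(s_{l+1})-\varepsilon(s_l)$, so that $\text{bias}_{A_0^{\text{SAE}}}=\sum_{l=0}^{T-1} w_l\big(\varepsilon(s_{l+1})-\varepsilon(s_l)\big)$. The technical heart is then summation by parts, which rewrites this as $w_{T-1}\varepsilon(s_T)-w_0\varepsilon(s_0)+\sum_{l=1}^{T-1}(w_{l-1}-w_l)\varepsilon(s_l)$. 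Here the convention $V(S_T)=\mathbb{I}[\text{correct}]$ forces $\varepsilon(s_T)=0$, eliminating the right boundary term, while $w_0=1$ and assumption~2 give $|w_0\varepsilon(s_0)|\le\alpha\exp(T/\beta)$, the leading $1$ in the bracket. Since $w_l$ is non-increasing, the increments $w_{l-1}-w_l$ are nonnegative and vanish except at $l=kM$, where they equal $\lambda^{k-1}(1-\lambda)$ for $k=1,\dots,n-1$.

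To finish I would bound $|\varepsilon(s_{kM})|\le\alpha\exp\!\big((n-k)M/\beta\big)$, factor out $\exp(T/\beta)=\exp(nM/\beta)$, and recognize the residual as a geometric series in $q:=\lambda\exp(-M/\beta)<1$. Summing $\sum_{k\ge1} q^k\le q/(1-q)$ and simplifying $\tfrac{1}{\lambda}\cdot\tfrac{q}{1-q}=\tfrac{1}{\exp(M/\beta)-\lambda}$ produces exactly $\alpha\exp(T/\beta)\,\tfrac{1-\lambda}{\exp(M/\beta)-\lambda}$; adding the $l=0$ contribution gives the stated bound, and monotonicity in $M$ follows because $\exp(M/\beta)$ is increasing, so the second bracket term shrinks as $M$ grows. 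The main obstacle is the summation-by-parts step together with the careful bookkeeping of which weight increments survive and the exact geometric summation; the remaining estimates are routine applications of the triangle inequality and the error model in assumption~2.
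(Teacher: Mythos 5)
Your proposal is correct and takes essentially the same approach as the paper: both reduce the bias to $-\varepsilon(s_0)+\sum_{k=1}^{n-1}\lambda^{k-1}(1-\lambda)\,\varepsilon(s_{kM})$ (using $\varepsilon(s_T)=0$) and then apply the triangle inequality with the error model and a geometric-series bound in $\mu=\lambda\exp(-M/\beta)$, yielding the identical final expression. The only cosmetic difference is that you derive this intermediate form via the closed-form weights $w_l=\lambda^{\lfloor l/M\rfloor}$ and Abel summation, whereas the paper telescopes $\varepsilon(s_{l+1})-\varepsilon(s_l)$ within each segment and then collects coefficients of $\varepsilon(s_{kM})$ --- the same algebraic manipulation organized differently.
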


% \begin{appendixproof}[of=thm:sae_bias_bound]
% This is the proof that will automatically be moved to the appendix.
% Because we used \verb|\newtheoremrep|, the theorem statement will precede this proof.
% \end{appendixproof}

The detailed proof is provided in the Appendix~\ref{app:bias_bound}.
The above theorem demonstrates that the bias decreases as the segment length $M$ increases. Consequently, for a given discount parameter $\lambda$, \alg{} may achieve a tighter bias bound compared to conventional token-level GAE, which corresponds to the special case where $M=1$. This reveals that segmentation-based advantage estimation provides an alternative mechanism for bias control beyond the traditional approach of tuning the discount parameter $\lambda$, offering practitioners an additional mechanism to regulate bias in advantage estimation.

\begin{figure}[t]
    \centering
    \includegraphics[width=0.6\textwidth]{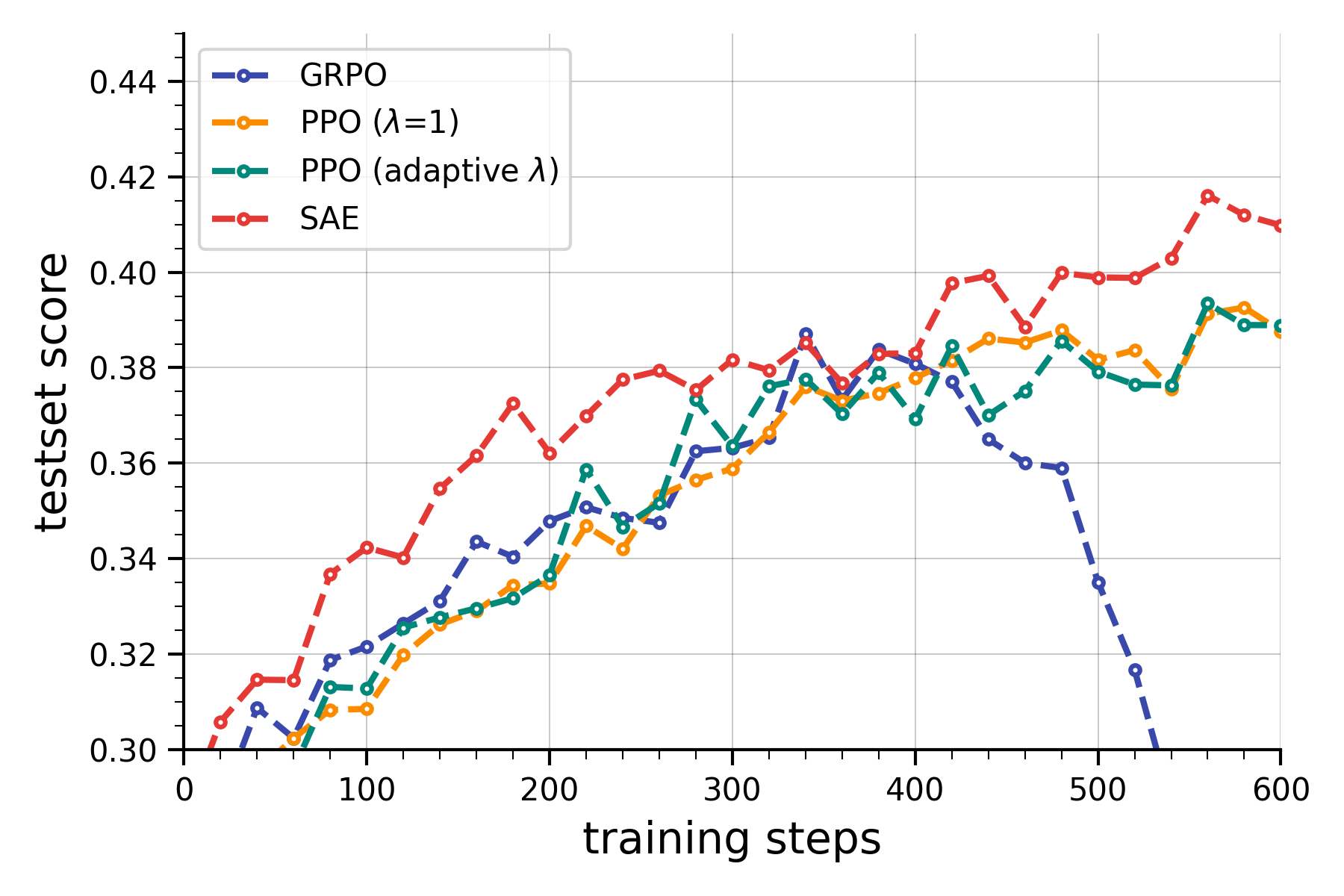}
    % \begin{subfigure}[b]{0.45\textwidth}
    %     \centering
    %     \includegraphics[width=\textwidth]{iclr2026/figures/qwen3_8b_test.png}
    %     \caption{Test Scores}
    %     \label{fig:train}
    % \end{subfigure}
    % \hfill
    % \begin{subfigure}[b]{0.45\textwidth}
    %     \centering
    %     \includegraphics[width=\textwidth]{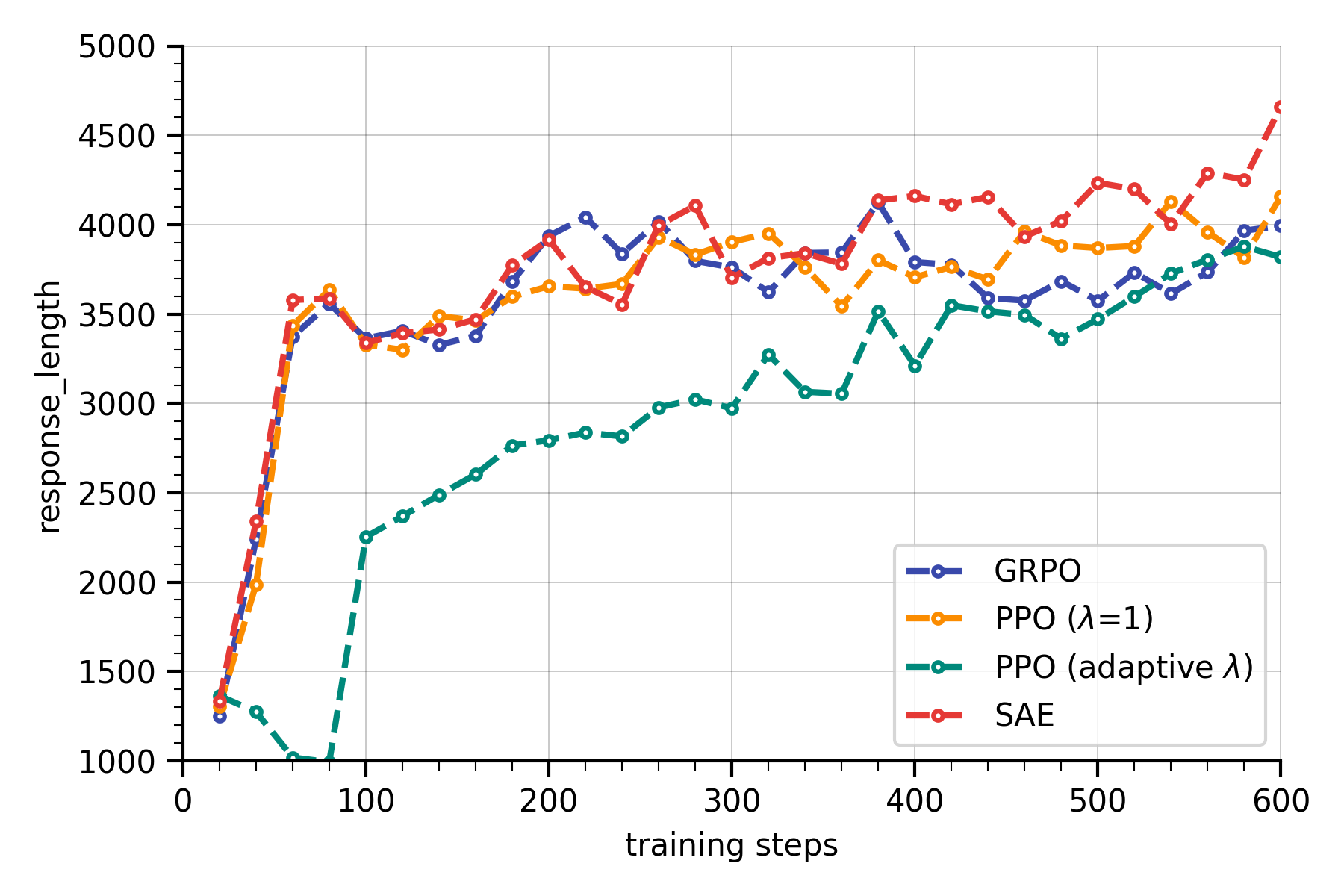}
    %     \caption{Response length}
    %     \label{fig:test}
    % \end{subfigure}
    \vspace{-0.5cm}
    \caption{Macro-averaged test-set scores across four test sets for all methods training on the Qwen3-8B-base. \alg{} maintains consistent improvements against baselines throughout the training process.}
    \label{fig:main_comp}
\end{figure}

\definecolor{darkred}{RGB}{177, 38, 26}
\definecolor{darkblue}{RGB}{67, 116, 177}
\definecolor{darkgreen}{rgb}{0.0, 0.5, 0.0}
\definecolor{deltaBg}{RGB}{178, 223, 219} %
\newcommand{\rowhighlight}{\rowcolor{deltaBg}}

\begin{table}[t]
    \setlength{\extrarowheight}{3pt}
    \centering
        \caption{Benchmark results for all methods trained on Qwen3-8B-base. \alg{} achieves highest average score with a margin of 2.09 percentage points over the strongest baseline.}
        % \label{tab: comparison on six benchmarks on qwen3-32b and qwen3-8b}
    \begin{tabular}{l|cccc}
    \toprule
    \multirow{2}{*}{\textbf{Benchmark}} & \multirow{2}{*}{\textbf{GRPO*}} & \multicolumn{3}{c}{\textbf{PPO}} \\
    \cline{3-5}
    & & $\lambda=1$ & adaptive $\lambda$ & SAE(ours) \\
        \hline
        AIME'24    & 35.42  & 34.79   & 35.42           & \textbf{38.54} \\
        AIME'25    & 27.50  & 29.17   & 25.42           & \textbf{30.21} \\
        AMC        & 74.10  & 74.85   & \textbf{78.39}  & 77.56          \\
        BeyondAIME & 15.31  & 16.25   & 16.33           & \textbf{17.62} \\
        \hline
        \textbf{Average} & 38.08 & 38.76    & 38.89     & \textbf{40.98} \\
    \bottomrule
    \end{tabular}
            \caption*{\footnotesize Note: GRPO* is evaluated at 400 steps due to training instability(see Figure~\ref{fig:main_comp}); all other methods are evaluated at 600 steps.}
    \label{tab:main_comp}
\end{table}

% \vspace{-1cm}
\section{Experiments}
\subsection{Setup}
\label{sec:setup}
\paragraph{Models and Datasets} 
We employ the Qwen3-8B-base model as our policy backbone and focus on the domain of mathematical problem solving. For our RL training, we utilize the DAPO-Math-17k dataset \citep{dapo}, a curated collection of 17,000 high-quality mathematical problems. To evaluate out-of-distribution generalization and performance on problems of varying difficulty, we assess the model's zero-shot performance on four held-out test sets: AIME24, AIME25, AMC, and BeyondAIME. These benchmarks collectively probe the model's ability to generalize across different annual versions of the AIME competition (AIME24/25), to problems of moderate difficulty (AMC), and to a broader and more challenging distribution of reasoning tasks (BeyondAIME).

\paragraph{Implementation Details}
During each training step, we sample 4096 rollouts from 512 prompts, generating 8 rollouts per prompt. The rollouts are generated with a temperature of 0.6 and a maximum response length of 8192 tokens. The actor and value models are updated with a batch size of 4096, using learning rates of $1 \times 10^{-6}$ and $1 \times 10^{-5}$, respectively. 
Following \citep{vapo}, we pre-train the value networks to provide a solid foundation for all PPO-based methods.
We do not apply a KL or entropy loss, updating the actor solely with the PPO loss (see Eq.\ref{eq:ppo_loss}). For \alg{}, we set $p=0.2$  in Eq.\ref{eq:boundary_function} universally for all experiments without further hyperparameter tuning.

\subsection{Main Results}

\paragraph{Baselines} 
To comprehensively evaluate the efficacy of our proposed method, we benchmark it against several representative baselines. All baselines were configured with hyperparameters identical to those of our proposed method to ensure a fair comparison. The selected baselines are as follows:
(1) \textbf{GRPO} \citep{grpo}: This approach does not utilize a value function to aid in the token-level advantage estimation.
(2) \textbf{PPO ($\lambda=1$)}: A common configuration in RLVR where the parameter $\lambda$ is set to 1 \citep{orz}.
% (3) \textbf{PPO ($\lambda=0.95$)}: This baseline adopts the default value for the GAE parameter $\lambda$ used in classic PPO implementations.
(3) \textbf{PPO (adaptive $\lambda$)}: This method employs the adaptive lambda strategy from \replace{}{VAPO}\citep{vapo}, where $\lambda$ is a function of the generation length $l$, defined as $\lambda=1-\frac{1}{0.2l}$.

\paragraph{Results}

As shown in Table~\ref{tab:main_comp}, our approach achieves the highest average score, with a margin of 2.09 percentage points over the strongest baseline. 
The training dynamics in Figure~\ref{fig:main_comp} further corroborate our approach's effectiveness, with \alg{} exhibiting superior sample efficiency from early training stages and maintaining consistent improvements throughout the entire process.
Results in Table~\ref{tab:main_comp} Figure~\ref{fig:main_comp} demonstrate that our proposed method demonstrate superior performance across all evaluation benchmarks and exhibit enhanced training stability compared to existing baselines. 

Notably, while GRPO suffers from training instability and performance collapse after approximately 400 training steps, all PPO-based variants including our proposed methods maintain stable convergence without deterioration. This observation aligns with findings in \citep{orz}, suggesting that PPO-based optimization may provide inherently better stability properties compared to GRPO for RLVR tasks.

\subsection{Ablation Study}

\begin{figure}[t]
    \centering

    \includegraphics[width=\textwidth]{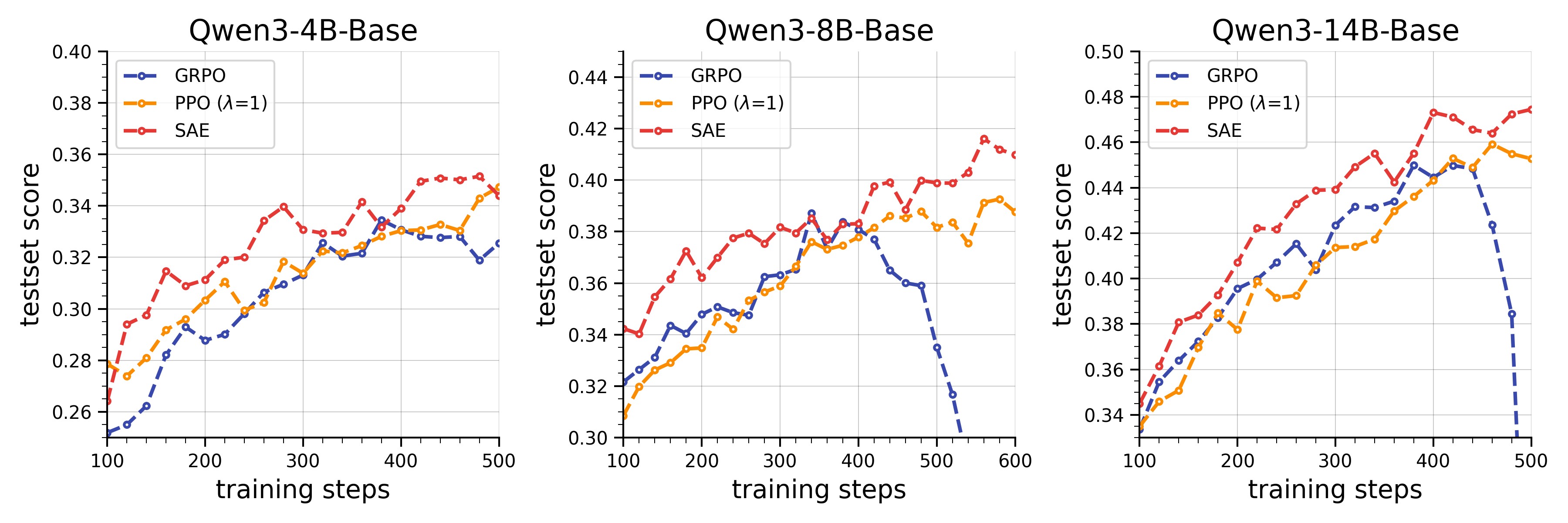}
    \caption{Results of different methods across model sizes (4B/8B/14B). \alg{} consistently outperforms other baselines.}
    \label{fig:model_size}
\end{figure}

% \captionsetup[subfigure]{justification=centering, singlelinecheck=false, position=bottom}
\begin{figure}[t]
    \centering
    % Tighten vertical spacing above/below figure if needed (adjust cautiously)
    \begin{subfigure}[t]{0.3\textwidth}
        \centering
        % Increase width to \linewidth and trim borders; set trim to 0bp if cropping not needed
        \includegraphics[width=\linewidth]{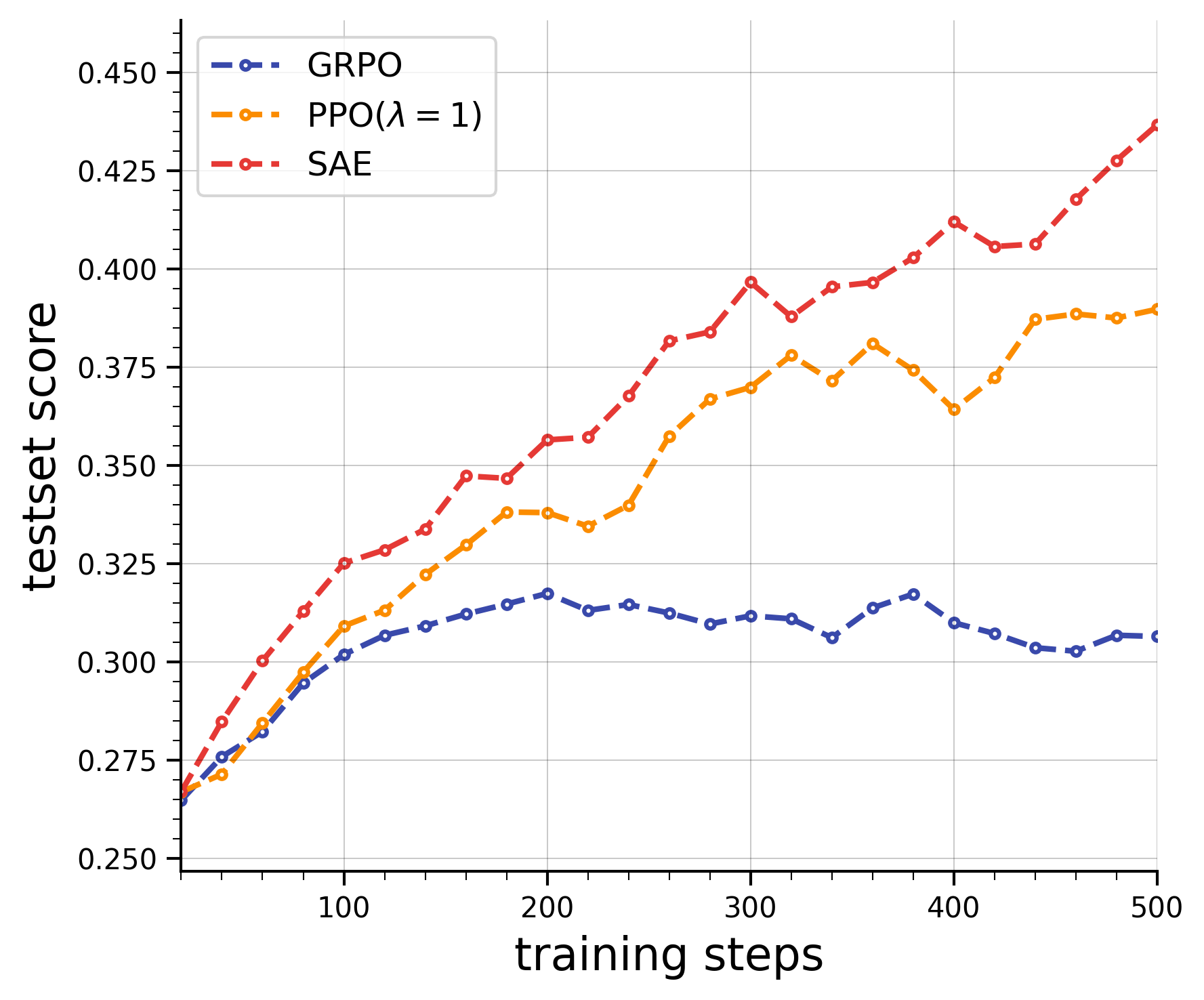}
        \caption{CODE}
        \label{fig:code}
    \end{subfigure}
    \quad
    \begin{subfigure}[t]{0.3\textwidth}
        \centering
        \includegraphics[width=\linewidth]{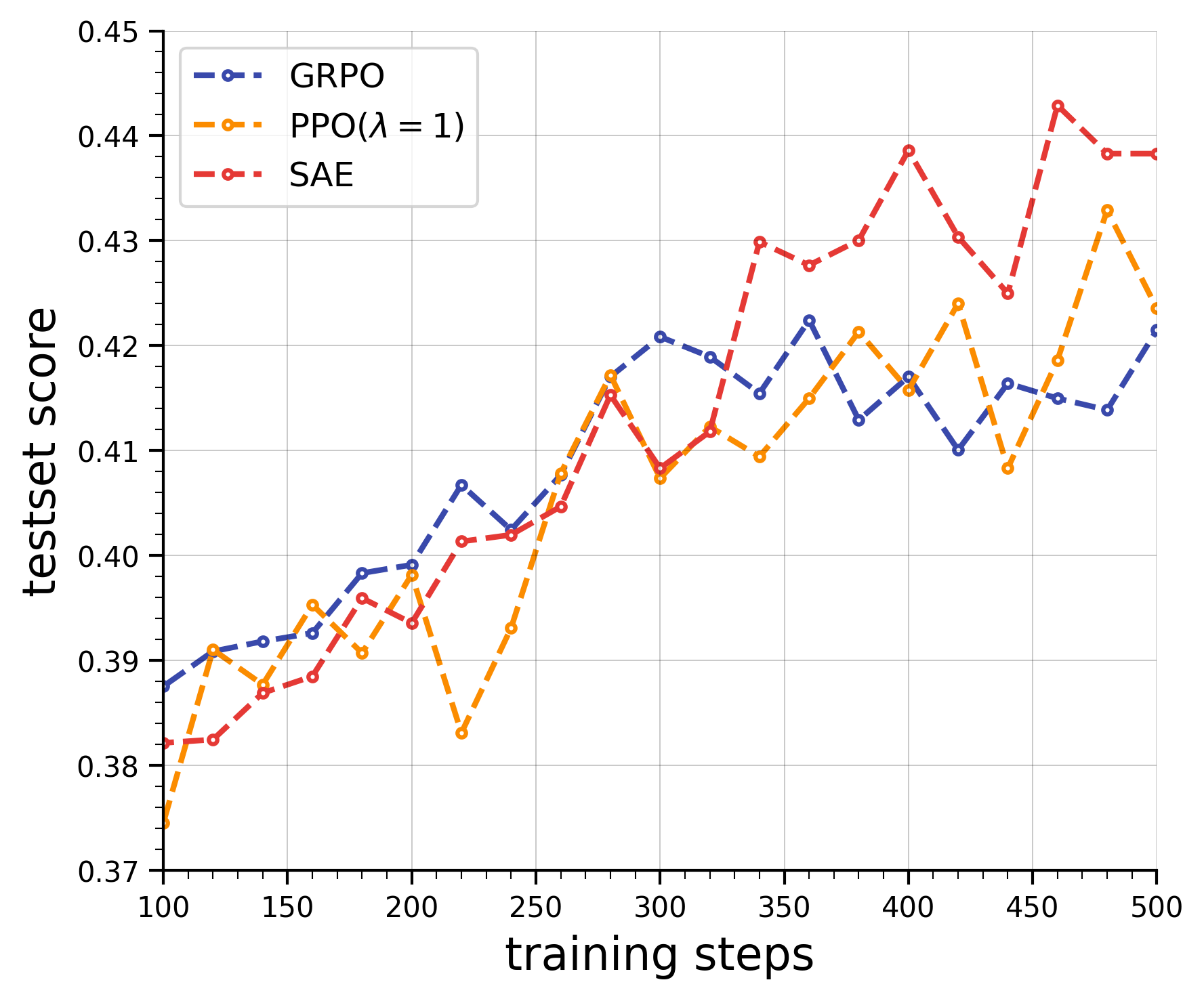}
        %\caption{Correlation of $GAE(\lambda)$ with Approximate $A^*$}
        \caption{STEM}
        \label{fig:stem}
    \end{subfigure}
    \caption{Results of additional reasoning domains (CODE and STEM). \alg{} consistently outperforms other baselines.}
    \label{fig:domain}
\end{figure}

\subsubsection{Consistent Gains Across Model Sizes \replace{}{and Domains}}

\replace{
To examine the generality of the proposed SAE approach across model capacities, we conducted experiments on three base models of different sizes: Qwen3-4B, Qwen3-8B, and Qwen3-14B. Owing to the computational cost of full-scale training, we restricted the comparison to two baselines: GRPO and PPO (with $\lambda=1$). 
All methods were trained under the same setting as in Section~\ref{sec:setup} to isolate the effect of model size.

The performance curves and final scores are summarized in Figure~\ref{fig:model_size}. Across all three parameter scales, SAE achieves higher performance than both baselines. The consistent relative advantage observed at 4B, 8B, and 14B suggests that the improvements brought by SAE are not confined to a particular capacity regime and remain stable as the underlying model size increases. 
We further observe that GRPO exhibits a degradation in performance after approximately 400 training steps across all settings, whereas PPO-based methods do not, providing additional evidence of the superior stability of PPO.
}{
To examine the generality of the proposed SAE approach, we conducted experiments to assess its robustness across two key dimensions: model scale and application domain. Our goal is to demonstrate that the benefits of SAE are not confined to a specific model capacity or a narrow task, but rather represent a fundamental improvement in the advantage estimation process. Owing to the computational cost of full-scale training, we restricted the comparison to two baselines: GRPO and PPO (with $\lambda=1$).
All methods were trained under the same setting as in Section 5.1, and other training details (e.g. model/train set/test set/...) are described in Appendix~\ref{app:details}.
% Unless specified, all methods were trained under the same setting as in Section~\ref{sec:setup} to isolate the effect of model . 

First, we evaluated performance across three base models of different sizes: Qwen3-4B, Qwen3-8B, and Qwen3-14B, focusing on the mathematical reasoning domain.The performance curves are summarized in Figure~\ref{fig:domain}. Across all three parameter scales, SAE achieves higher performance than both baselines. The consistent relative advantage observed at 4B, 8B, and 14B suggests that the improvements brought by SAE are not confined to a particular capacity regime and remain stable as the underlying model size increases. 
We further observe that GRPO exhibits a degradation in performance after approximately 400 training steps across all settings, whereas PPO-based methods do not, providing additional evidence of the superior stability of PPO.

Second, to validate the effectiveness of SAE beyond mathematical reasoning, we extended our evaluation to two additional challenging domains: code generation and general STEM problems. As shown in Figure~\ref{fig:model_size},  SAE consistently outperform other baselines in both domains. Notably, in the code domain we observed that while GRPO's performance on the test set began to stagnate after approximately 200 steps. In contrast, SAE's test performance continued to climb. This suggests that SAE, by providing more fine-grained credit assignment, can potentially unlock a higher level of performance and better generalization.
% Interestingly, we find that GRPO fails to improve scores on testset after ~200 steps, while its rewards on train set do increases.

% The results unequivocally demonstrate that SAE's superiority holds in these new domains.

}

% Besides, we can also see that GRPO suffer from a performance degrade after training for 400 steps across all experiments, while all PPO-based methods does not. This observations further serve as evidence of the superior stablity of PPO.

\captionsetup[subfigure]{justification=centering, singlelinecheck=false, position=bottom}
\begin{figure}[t]
    \centering
    % Tighten vertical spacing above/below figure if needed (adjust cautiously)
    \begin{subfigure}[b]{0.49\textwidth}
        \centering
        % Increase width to \linewidth and trim borders; set trim to 0bp if cropping not needed
        \includegraphics[width=\linewidth,trim=10bp 8bp 10bp 8bp,clip]{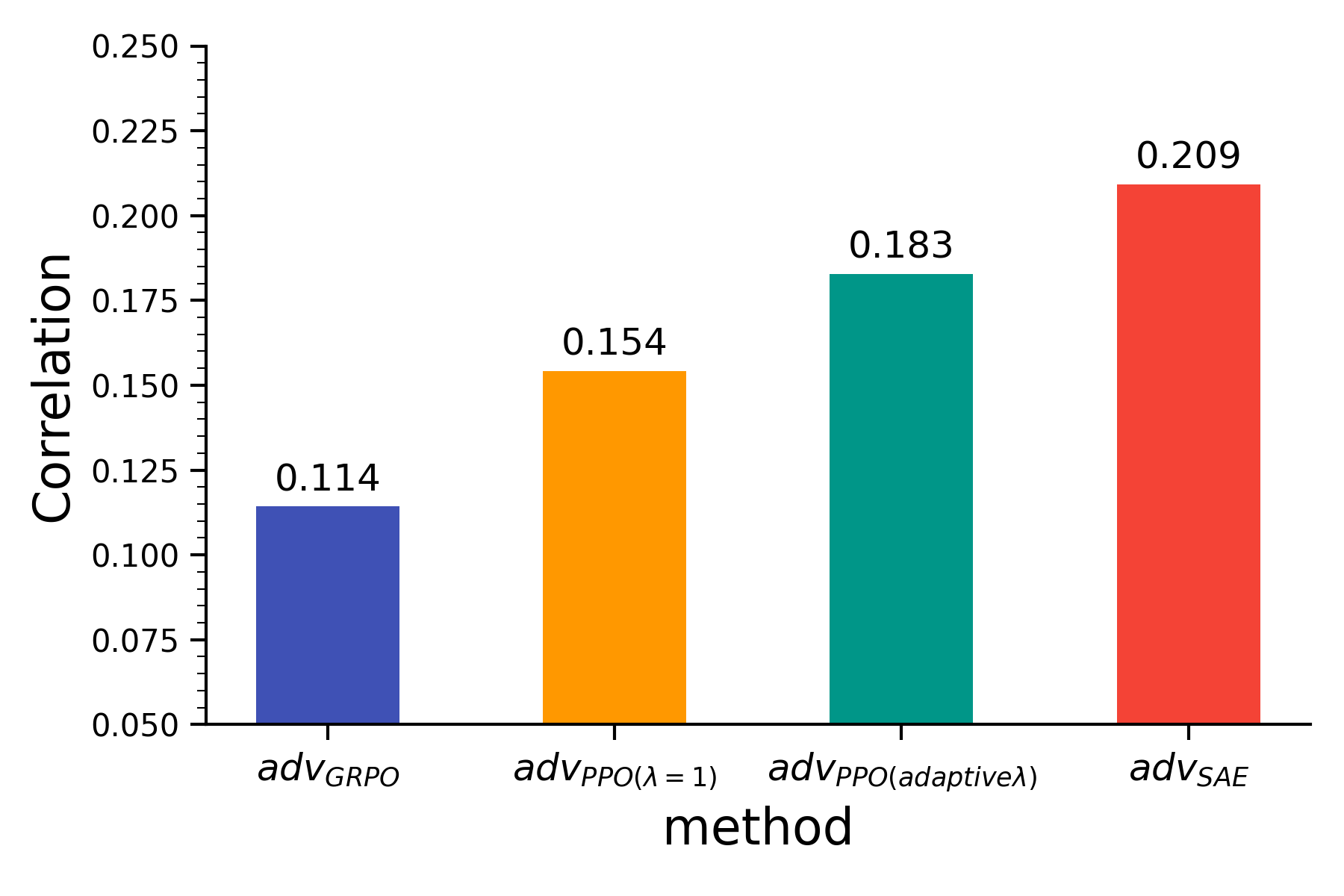}
        \caption{Correlation of all estimators with $A^*$}
        \label{fig:corr}
    \end{subfigure}
    \hfill
    \begin{subfigure}[b]{0.49\textwidth}
        \centering
        \includegraphics[width=\linewidth,trim=10bp 8bp 10bp 8bp,clip]{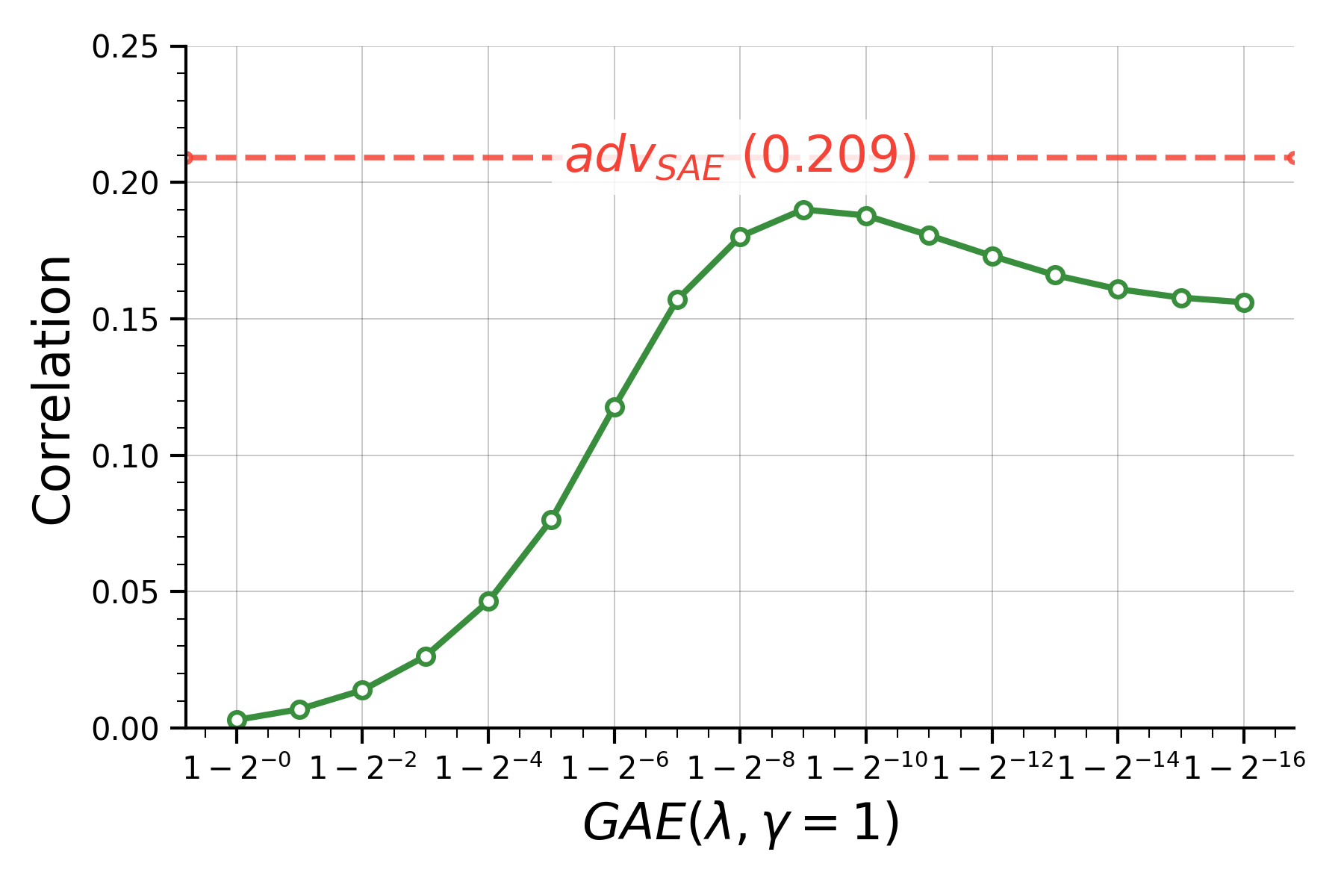}
        %\caption{Correlation of $GAE(\lambda)$ with Approximate $A^*$}
        \caption{Correlation of GAE($\lambda$) with $A^*$, as a function of $\lambda$\vphantom{\parbox{\linewidth}{X\\X}}}
        \label{fig:corr_lambda}
    \end{subfigure}
    \caption{Correlation of different methods with approximate ground-truth advantage $A^*$, which is obtained by extensive Monte Carlo sampling.  \alg{} and $A^*$ are the most highly correlated among all methods. }
    \label{fig:lambda_adv}
\end{figure}

\subsubsection{Correlation with an Approximate Ground-Truth Advantage}
\label{sec:gt_adv}
% To directly assess the relative quality of different advantage estimators, we construct an approximate ground-truth advantage and compare each method against it via correlation analysis.
To directly assess the relative quality of different advantage estimators, we construct an approximate ground-truth advantage and evaluate each method by its correlation with this reference.
% Concretely, we randomly sample multiple trajectory segments, and for each segment identify its starting state $s_t$ and ending state $s_{t+m}$. 
Specifically, we randomly sample multiple trajectory segments and, for each segment, identify its initial state $s_t$ and terminal state $s_{t+m}$.
Starting from each of $s_t$ and $s_{t+m}$, we perform 32 independent rollouts to obtain  Monte Carlo estimates of their state values, denoted $V^*(s_t)$ and $V^*(s_{t+m})$. 
% We then define the segment-level approximate ground-truth advantage as $A^* = V^*(s_{t+m}) - V^*(s_t)$, which are assigned to each tokens in the segment. 
We then define the segment-level approximate ground-truth advantage as $A^* = V^*(s_{t+m}) - V^*(s_t)$, which we assign to every token within the segment.
For every token, we also compute the corresponding advantage estimates produced by the competing methods (including \alg{} and other baselines), and we report the Pearson correlation between each estimator and $A^*$. 
% Note that the Pearson correlation is a composite metric that jointly reflects the effects of bias and variance when comparing different estimation methods.
% Note that Pearson correlation is a composite metric that simultaneously reflects the effects of bias and variance.
% Note that the Pearon correlation 

Figure~\ref{fig:corr} shows that among all evaluated methods, our SAE estimator achieves the highest correlation with $A^*$. 
To further investigate the potential upper bound on GAE’s accuracy,  we analyze how its correlation with $A^*$ changes as a function of $\lambda$ (Figure ~\ref{fig:corr_lambda}). Across the entire tested range of $\lambda$, the correlation of GAE with $A^*$ remains uniformly below that of SAE. 
Taken together, these experiments provide direct evidence that SAE yields a more reliable and precise approximation of the true advantage. 
This improved estimation accuracy helps explain why SAE more effectively guides actor updates, as demonstrated in previous sections.

\captionsetup[subfigure]{justification=centering, singlelinecheck=false, position=bottom}
\begin{figure}[t]
    \centering
    % Tighten vertical spacing above/below figure if needed (adjust cautiously)
    \begin{subfigure}[b]{0.4\textwidth}
        \centering
        % Increase width to \linewidth and trim borders; set trim to 0bp if cropping not needed
        \includegraphics[width=\linewidth]{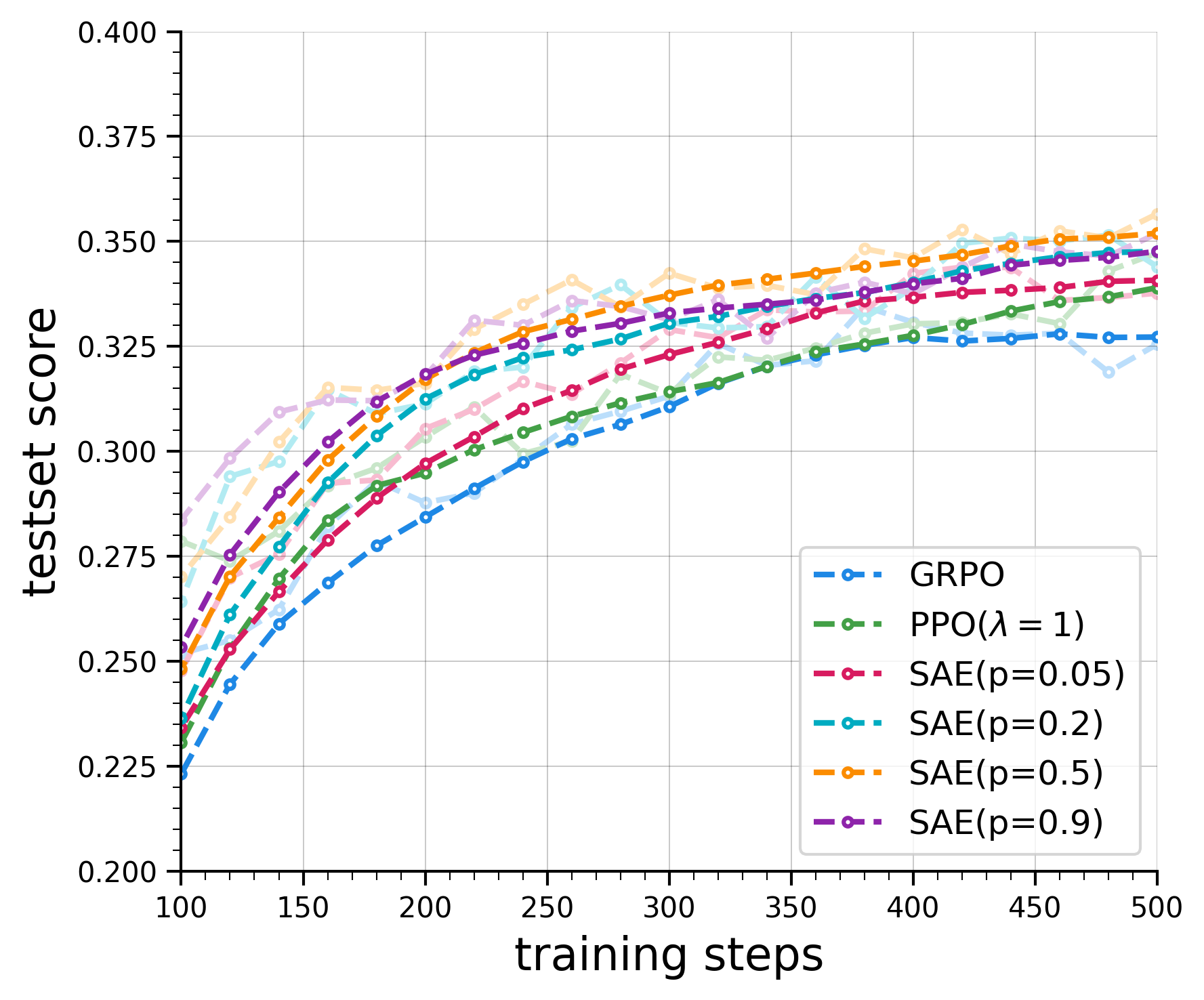}
        \caption{SAE with different segmentation threshold $p$.}
        \label{fig:diff_p}
    \end{subfigure}
    \quad
    \begin{subfigure}[b]{0.4\textwidth}
        \centering
        \includegraphics[width=\linewidth]{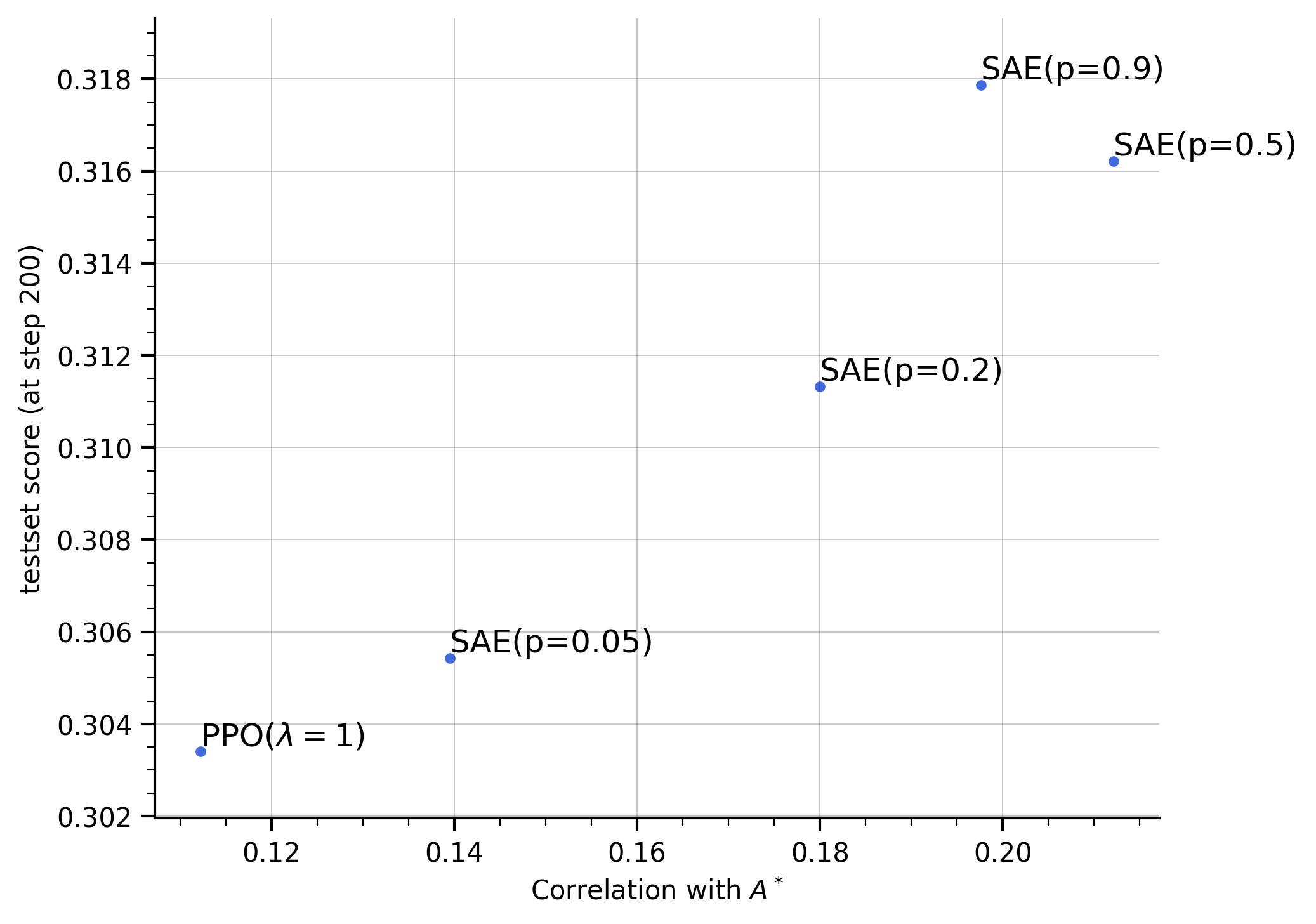}
        %\caption{Correlation of $GAE(\lambda)$ with Approximate $A^*$}
        \caption{Relation between the $\texttt{Corr}(\cdot, A^*)$ and testset score (at step 200).}
        \label{fig:corr_lambda}
    \end{subfigure}
    \caption{Ablation on the different segmentation threshold $p$ of \alg{}. 
    Figure~\ref{fig:diff_p} illustrates that SAE consistently outperforms the baseline methods across various values of p. Figure~\ref{fig:corr_lambda} reveals a significant correlation between the performance of each method and the quality of its computed advantage, which is quantified by measuring its correlation with the ground-truth advantage, as defined in Section~\ref{sec:gt_adv}.
    }
    \label{fig:diff_p_all}
\end{figure}

\replace{}{
\subsubsection{Robustness to the Threshold of Probability-based Segmentation}
The segmentation threshold $p$ (Eq~\ref{eq:boundary_function}) is a key hyperparameter in SAE, as it directly controls the granularity of the sequence partitioning. A crucial question is how sensitive SAE's performance is to this parameter. Therefore, in this section we conducted an ablation study to analyze this sensitivity.

We conducted an ablation study across a range of values ($p\in \{0.05, 0.2, 0.5, 0.9\}$) on Qwen3-4B-Base. As shown in Figure~\ref{fig:diff_p}, we found that SAE is robust to this hyperparameter, consistently outperforming other baselines for all tested p values. This suggests the performance gains are rooted in the segmental estimation strategy itself, rather than a finely-tuned heuristic.

More importantly, we found that the final task performance is strongly correlated with the quality of the advantage signal, as shown in Figure~\ref{fig:corr_lambda}. It reveals a significant correlation between the performance of each method and the quality of its computed advantage, which is quantified by measuring its correlation with the ground-truth advantage, as defined in Section~\ref{sec:gt_adv}.
% Specifically, the ranking of downstream performance for different p values ($p=0.5 > p=0.2 ≈ p=0.9 > p=0.05$) precisely mirrored the ranking of their Pearson correlation with the approximate ground-truth advantage A* (from Section 5.3.2).
This finding provides a principled validation for our probability-based heuristic, directly linking a higher-quality advantage signal to better results. Furthermore, it suggests a promising direction for future work, where $p$ could be dynamically tuned to maximize this correlation, leading to a more adaptive and robust training process.
}

\section{Conclusion}
This paper addresses the challenge of high estimation bias GAE when applying PPO to long-horizon reasoning tasks within the Reinforcement Learning with Verifiable Rewards (RLVR) regime. We introduced Segmental Advantage Estimation (SAE), a novel method that mitigates this bias by replacing GAE’s token-level bootstrapping with a more strategic segment-level approach. By partitioning sequences into semantically coherent segments using a probability-based heuristic, SAE selectively computes advantages only at meaningful transition boundaries.

Our empirical evaluation on mathematical problem-solving benchmarks demonstrates that SAE significantly outperforms established baselines, including GRPO and various PPO configurations, in terms of final performance, training stability, and sample efficiency. These improvements are consistent across multiple model scales. Furthermore, a direct correlation analysis validates that our proposed estimator achieves a higher alignment with an approximate ground-truth advantage, substantiating its effectiveness in reducing estimation bias. \replace{}{We also conduct an ablation on \alg{}'s sensitivity to segmentation threshold $p$, showing that \alg{} is robust to this hyperparameter.}

% As for the future work, we
For future work, a primary direction is the exploration of more sophisticated segmentation strategies; our preliminary experiments found that a naive uniform segmentation fails to improve sample efficiency, underscoring the critical role of the segmentation heuristic. 
\replace{Additionally, extending the application of SAE to other long-horizon reasoning domains beyond mathematics presents another promising avenue for research.}{}
% \todo{Low-prob v.s. Entropy v.s. Uniform. Why Low-prob better than entropy?}

% \todo{case study, word cloud.}

% \todo{or, DO NOT mention Entropy at all.}

\bibliography{iclr2026_conference}
\bibliographystyle{iclr2026_conference}

\appendix
% \section{Appendix}
\section{Segment Advantage Estimation: Formulation Equivalence}
\begin{definition}[Segment-Aware Discount Function]
Let $\mathcal{B} = \{t_1, t_2, \ldots, t_K, T\}$ be a strictly increasing sequence of segment boundaries including the terminal position. For $t \in \{1, 2, \ldots, T-1\}$ and $\lambda \in (0, 1)$, define:
\begin{equation}
\lambda_{\text{SAE}}(t) = \begin{cases}
1 & \text{if } (t+1) \notin \mathcal{B} \\
\lambda & \text{if } (t+1) \in \mathcal{B}
\end{cases}
\end{equation}
\end{definition}

\begin{theorem}[SAE Formulation Equivalence]
The following two SAE formulations are equivalent:
\begin{align}
\text{Boundary form: } A_t^{\text{SAE}} &= (1-\lambda) \sum_{k: t_k > t, t_k < T} \lambda^{k-j} \sum_{i=0}^{t_k-t-1} \delta_{t + i} + \lambda^{|\mathcal{B}|-j} \sum_{i=0}^{T-t-1} \delta_{t + i} \label{eq:boundary}\\
\text{Product form: } A_t^{\text{SAE}} &= \sum_{l=0}^{T-t-1} \left(\prod_{i=0}^{l-1} \lambda_{\text{SAE}}(t+i)\right) \delta_{t+l} \label{eq:product}
\end{align}
where $j = \min\{k: t_k > t, t_k\in \mathcal{B}\}$.
\label{thm:sae_equal}
\end{theorem}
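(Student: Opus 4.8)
The plan is to prove the identity coefficient-by-coefficient. Both the boundary form \eqref{eq:boundary} and the product form \eqref{eq:product} are linear combinations of the temporal-difference errors $\delta_t, \delta_{t+1}, \ldots, \delta_{T-1}$, so it suffices to show that the coefficient of each $\delta_{t+l}$ (for $l = 0, 1, \ldots, T-t-1$) agrees in the two expressions. I would first fix $t$, set $N := |\mathcal{B}|$ so that $t_N = T$, and recall $j = \min\{k : t_k > t\}$, so that $t_j, t_{j+1}, \ldots, t_N = T$ are exactly the boundaries lying strictly after $t$.

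Next I would compute the product-form coefficient. By the definition of $\lambda_{\text{SAE}}$, the factor $\lambda_{\text{SAE}}(t+i)$ equals $\lambda$ precisely when $t+i+1 \in \mathcal{B}$ and equals $1$ otherwise. As $i$ ranges over $0, \ldots, l-1$ the argument $t+i+1$ ranges over $t+1, \ldots, t+l$, so the product $\prod_{i=0}^{l-1}\lambda_{\text{SAE}}(t+i)$ collapses to $\lambda^{q_l}$, where $q_l := |\{m \in \mathcal{B} : t < m \le t+l\}|$ counts the boundaries in the half-open interval $(t, t+l]$.

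Then I would compute the boundary-form coefficient of the same $\delta_{t+l}$. The key observation is that the inner sum $\sum_{i=0}^{t_k - t - 1}\delta_{t+i}$ contains $\delta_{t+l}$ iff $t+l < t_k$. Hence $\delta_{t+l}$ picks up $(1-\lambda)\lambda^{k-j}$ from every non-terminal boundary $t_k$ with $t+l < t_k < T$, plus $\lambda^{N-j}$ from the terminal term. Writing $q = q_l$, the surviving boundaries are $t_{j+q}, \ldots, t_{N-1}$, so the coefficient equals $(1-\lambda)\sum_{m=q}^{N-1-j}\lambda^{m} + \lambda^{N-j}$. Summing this geometric series gives $(\lambda^q - \lambda^{N-j}) + \lambda^{N-j} = \lambda^{q}$, matching the product-form coefficient $\lambda^{q_l}$.

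The main bookkeeping obstacle — and the only place where care is genuinely needed — is the interaction between the terminal boundary $T = t_N$, which lies in $\mathcal{B}$ but is excluded from the first sum by the constraint $t_k < T$, and the separate terminal term $\lambda^{N-j}$; it is precisely this term that supplies the $+\lambda^{N-j}$ needed to close the telescoping series. I would also verify the two extreme cases explicitly: $q=0$ (no boundary crossed yet, so the coefficient reduces to $1$) and $q = N-j$ (all non-terminal boundaries after $t$ already crossed, so the first sum is empty and only $\lambda^{N-j}$ survives), both of which reduce correctly. Once every coefficient matches, equality of the two forms follows immediately.
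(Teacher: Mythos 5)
Your proposal is correct and follows essentially the same route as the paper's own proof: match the coefficient of each $\delta_{t+l}$, observe that the product form collapses to $\lambda$ raised to the number of boundaries in $(t, t+l]$, and sum the geometric series in the boundary form so the terminal term $\lambda^{|\mathcal{B}|-j}$ closes the telescoping. Your counter $q_l$ is exactly the paper's $k_l - j$, so the two arguments are the same up to bookkeeping.
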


\begin{proof}
We show that each $\delta_{t+l}$ has identical coefficients in both formulations.

In the boundary form, $\delta_{t+l}$ appears in intermediate terms when $t_k \geq t+l+1$ and in the terminal term. Let $k_l = \min\{k: t_k \geq t+l+1, t_k < T\}$. The coefficient is:
\begin{align}
\text{coeff}(\delta_{t+l}) &= (1-\lambda) \sum_{k=k_l}^{|\mathcal{B}|-1} \lambda^{k-j} + \lambda^{|\mathcal{B}|-j}\\
&= (1-\lambda) \lambda^{k_l-j} \sum_{m=0}^{|\mathcal{B}|-1-k_l} \lambda^m + \lambda^{|\mathcal{B}|-j}\\
&= (1-\lambda) \lambda^{k_l-j} \frac{1-\lambda^{|\mathcal{B}|-k_l}}{1-\lambda} + \lambda^{|\mathcal{B}|-j}\\
&= \lambda^{k_l-j}(1-\lambda^{|\mathcal{B}|-k_l}) + \lambda^{|\mathcal{B}|-j} = \lambda^{k_l-j}
\end{align}

The key observation is that $k_l - j$ equals the number of boundaries crossed from $t$ to $t+l$:
\begin{equation}
k_l - j = \sum_{i=0}^{l-1} \mathbf{1}[(t+i+1) \in \mathcal{B}]
\end{equation}

In the product form, the coefficient of $\delta_{t+l}$ is:
\begin{align}
\prod_{i=0}^{l-1} \lambda_{\text{SAE}}(t+i) &= \prod_{i=0}^{l-1} \lambda^{\mathbf{1}[(t+i+1) \in \mathcal{B}]} = \lambda^{\sum_{i=0}^{l-1} \mathbf{1}[(t+i+1) \in \mathcal{B}]} = \lambda^{k_l-j}
\end{align}

Therefore, both formulations yield identical coefficients for each $\delta_{t+l}$, establishing equivalence.
\end{proof}

\section{The Upper Bound of Bias in \alg{}}
% \printproofs

\label{app:bias_bound}

% \begin{theorem}[Refined Bias Upper Bound for Selective Advantage Estimation]
% \label{thm:sae_bias_refined}
% Consider the SAE advantage estimation at position $t = 0$ as defined in Eq.\ref{eq:sae_gae_2} under the assumptions in Theorem~\ref{thm:sae_bias_bound}, with the additional assumption that $\varepsilon(s_T) = 0$.

% The bias of SAE can be expressed as:
% \begin{equation}
% \left|\text{bias}_{A_0^{\text{SAE}}}\right| 
% % \leq \alpha \exp\left(\frac{T}{\beta}\right) \left[1 + \frac{(1-\lambda)\exp\left(\frac{-M}{\beta}\right)}{1 - \lambda \exp\left(\frac{-M}{\beta}\right)}\right]
% \leq \alpha \exp\left(\frac{T}{\beta}\right) \left[1 + \frac{1-\lambda}{\exp\left(\frac{M}{\beta}\right) - \lambda}\right]
% \end{equation}
% Consequently, the bias upper bound is inversely related to M: larger values of M yield tighter bounds.
% \end{theorem}

\maintheorem* % 使用 "唯一标签*" 来重述定理

\begin{proof}
% \textbf{Step 1: Exact bias decomposition via telescoping}

From the SAE formulation, the bias contribution is:
\begin{equation}
\text{bias}_{A_0^{\text{SAE}}} = \sum_{l=0}^{T-1} \left(\prod_{i=0}^{l-1} \lambda_{\text{SAE}}(i)\right) [\varepsilon(s_{l+1}) - \varepsilon(s_l)]
\end{equation}

For uniform segmentation, we partition the sum by segments $S_k = [kM, (k+1)M)$ for $k = 0, 1, \ldots, T/M-1$:
\begin{align}
\text{bias}_{A_0^{\text{SAE}}} &= \sum_{k=0}^{T/M-1} \sum_{l=kM}^{(k+1)M-1} \lambda^k [\varepsilon(s_{l+1}) - \varepsilon(s_l)] \\
&= \sum_{k=0}^{T/M-1} \lambda^k [\varepsilon(s_{(k+1)M}) - \varepsilon(s_{kM})]
\end{align}

where we used the telescoping property within each segment and $\prod_{i=0}^{kM-1} \lambda_{\text{SAE}}(i) = \lambda^k$.

% \textbf{Step 2: Rearrangement and cancellation structure}

Expanding the telescoping sum:
\begin{align}
\text{bias}_{A_0^{\text{SAE}}} &= [\varepsilon(s_M) - \varepsilon(s_0)] + \lambda[\varepsilon(s_{2M}) - \varepsilon(s_M)] \\
&\quad + \lambda^2[\varepsilon(s_{3M}) - \varepsilon(s_{2M})] + \ldots + \lambda^{T/M-1}[\varepsilon(s_T) - \varepsilon(s_{(T/M-1)M})]
\end{align}

Collecting terms by $\varepsilon(s_{kM})$ coefficients:
\begin{align}
\text{bias}_{A_0^{\text{SAE}}} &= -\varepsilon(s_0) + \varepsilon(s_M)(1-\lambda) + \varepsilon(s_{2M})\lambda(1-\lambda) \\
&\quad + \varepsilon(s_{3M})\lambda^2(1-\lambda) + \ldots + \varepsilon(s_{(T/M-1)M})\lambda^{T/M-2}(1-\lambda) + \lambda^{T/M-1}\varepsilon(s_T)
\end{align}

Under the assumption $\varepsilon(s_T) = 0$, we obtain:
\begin{equation}
\text{bias}_{A_0^{\text{SAE}}} = -\varepsilon(s_0) + \sum_{k=1}^{T/M-1} \lambda^{k-1} (1-\lambda) \varepsilon(s_{kM})
\end{equation}

% \textbf{Step 3: Upper bound derivation}

Applying the triangle inequality and error bound assumption:
\begin{align}
\left|\text{bias}_{A_0^{\text{SAE}}}\right| &\leq |\varepsilon(s_0)| + \sum_{k=1}^{T/M-1} \lambda^{k-1} (1-\lambda) |\varepsilon(s_{kM})| \\
&\leq \alpha \exp\left(\frac{T}{\beta}\right) + (1-\lambda) \sum_{k=1}^{T/M-1} \lambda^{k-1} \alpha \exp\left(\frac{T-kM}{\beta}\right) \\
&= \alpha \exp\left(\frac{T}{\beta}\right) \left[1 + (1-\lambda) \sum_{k=1}^{T/M-1} \lambda^{k-1} \exp\left(\frac{-kM}{\beta}\right)\right]
\end{align}

For the geometric series with $\mu = \lambda \exp\left(\frac{-M}{\beta}\right) < 1$:
\begin{equation}
\sum_{k=1}^{T/M-1} \lambda^{k-1} \exp\left(\frac{-kM}{\beta}\right) = \lambda^{-1}\sum_{k=1}^{T/M-1} \mu^k \leq \sum_{k=1}^{\infty} \mu^k = \frac{\mu}{\lambda(1-\mu)} = \frac{\exp\left(\frac{-M}{\beta}\right)}{1 - \lambda\exp\left(\frac{-M}{\beta}\right)}
\end{equation}

Therefore:
\begin{equation}
\left|\text{bias}_{A_0^{\text{SAE}}}\right| \leq \alpha \exp\left(\frac{T}{\beta}\right) \left[1 + \frac{(1-\lambda)\exp\left(\frac{-M}{\beta}\right)}{1 - \lambda \exp\left(\frac{-M}{\beta}\right)}\right]
= \alpha \exp\left(\frac{T}{\beta}\right) \left[1 + \frac{1-\lambda}{\exp\left(\frac{M}{\beta}\right) - \lambda}\right]
\end{equation}
\end{proof}

\section{The Use of Large Language Models}
We employ LLMs to help writing this paper, but only for the purpose of polishing writing. 
Specifically, we use LLMs to translate or rewrite specific sentences to improve the flow of the article. Each sentence generated by LLM is carefully checked.

\replace{}{

\section{Training Details of Section 5.3.1}
\label{app:details}
For the ablation study on model size, we adopted the same configuration as described in Section 5.1, with the sole modification being the initial model size across different experimental runs.

For the domain-specific experiments, a universal model size of 4B was employed due to computational resource constraints. In comparison to the setup in Section 5.1, the primary distinctions reside in the model and the data, as detailed in the table below:

% \hrulefill % Adds a horizontal line

\paragraph{Domain: CODE}
\begin{itemize}
    \item \textbf{Initial Model:} Qwen3-4B-Instruct-2707
    \item \textbf{Training Data:} AReal boba2 \citep{areal_code, deepcoder2025}
    \item \textbf{Test Data:} APPS\citep{apps}, Codecontests\citep{codecontests}, Codeforces\citep{codeforces}, and TACO\citep{taco}
\end{itemize}

\paragraph{Domain: STEM}
\begin{itemize}
    \item \textbf{Initial Model:} Qwen3-4B-Base
    \item \textbf{Training Data:} SCP-116K-cleaned \citep{sci_train}, adopted form ProRL\citep{liu2025prorl}
    \item \textbf{Test Data:} GPQA-Diamond\citep{gpqa_diamond} 
\end{itemize}

\section{Reimplementation of VAPO}

\begin{figure}[t]
    \centering
    % Tighten vertical spacing above/below figure if needed (adjust cautiously)
    \begin{subfigure}[t]{0.3\textwidth}
        \centering
        % Increase width to \linewidth and trim borders; set trim to 0bp if cropping not needed
        \includegraphics[width=\linewidth]{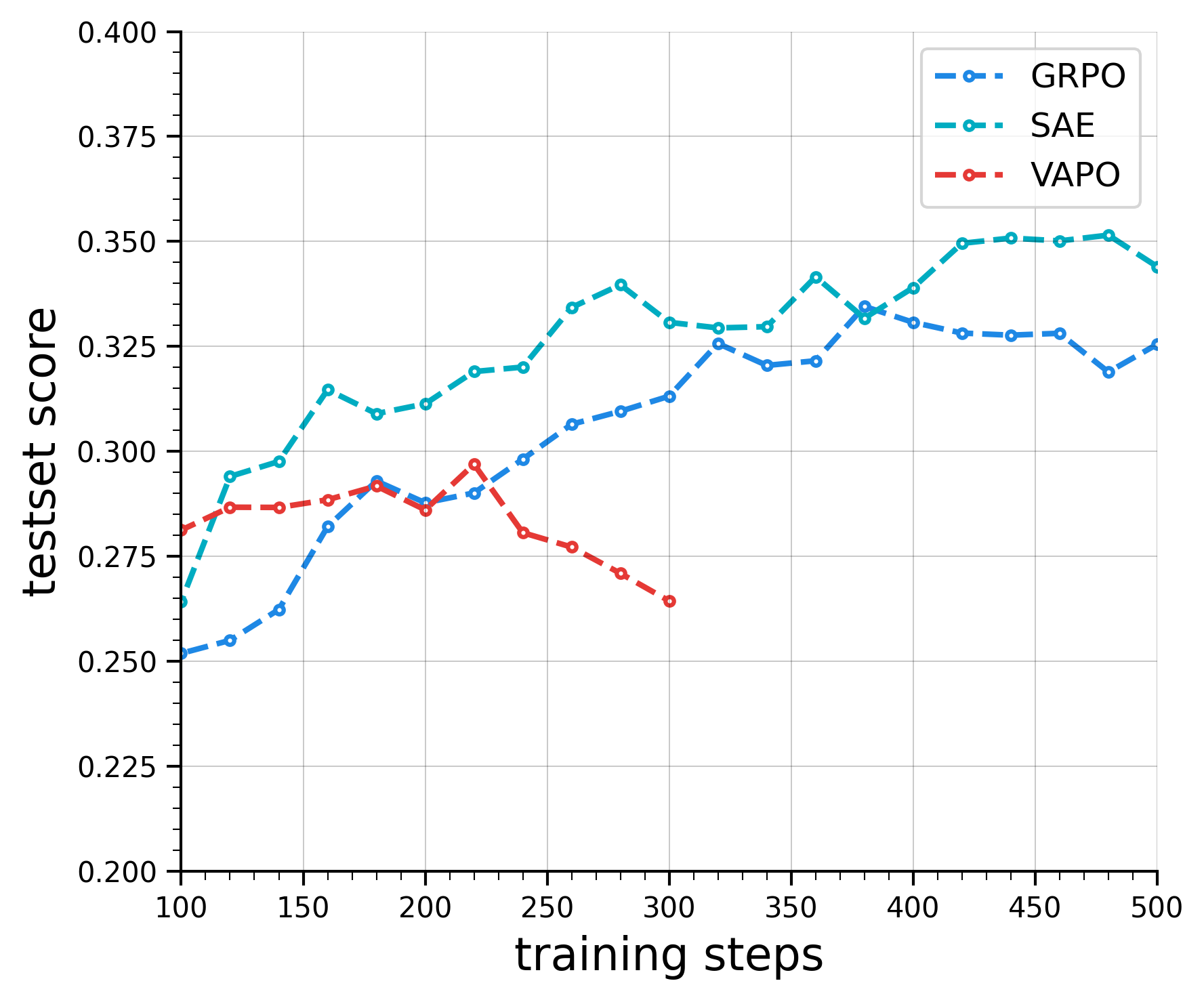}
        \caption{Test score of VAPO}
        \label{fig:vapo}
    \end{subfigure}
    \quad
    \begin{subfigure}[t]{0.3\textwidth}
        \centering
        \includegraphics[width=\linewidth]{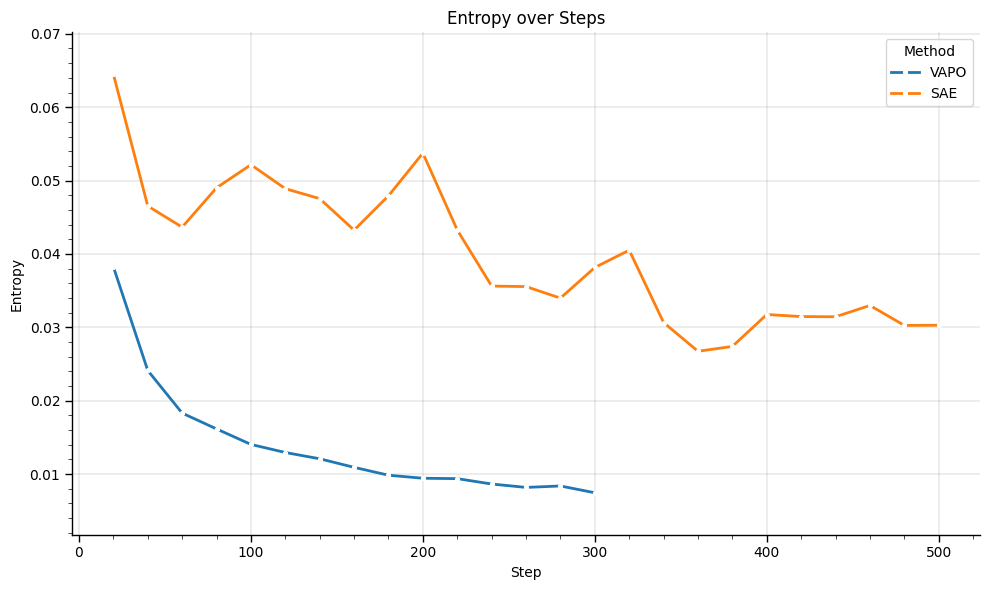}
        %\caption{Correlation of $GAE(\lambda}
        \caption{Entropy curves of  VAPO and SAE. }
        \label{fig:vapo_entropy}
    \end{subfigure}
    \caption{Comparation of re-implemented VAPO and SAE. (left) The testscore growth of VAPO plateaued after approximately 200 steps, (right) possibly due to the fast diminishment of entropy loss.}
    \label{fig:domain}
\end{figure}

VAPO is one of our baseline models. However, its implementation is not open-source, and there is no publicly available code that can reproduce the results reported in the original paper. Consequently, in our preliminary experiments, we endeavored to implement VAPO, incorporating its key features such as the positive-example LM loss and adaptive $\lambda$. We then conducted experiments on the Qwen3-4B model using the VAPO training configuration.
As illustrated in Figure~\ref{fig:vapo}, our VAPO implementation exhibited rapid initial performance gains, but its growth plateaued after approximately 200 steps. Further analysis, shown in Figure~\ref{fig:vapo_entropy}, revealed that the entropy loss diminished too quickly. In reinforcement learning, a rapid decrease in entropy can lead to suboptimal performance by prematurely reducing policy exploration. This premature convergence on a specific strategy can prevent the agent from discovering a more optimal policy.

Therefore, to create a more effective and controlled baseline, we isolated the most critical component of VAPO relevant to our work: the use of an adaptive lambda for advantage calculation. All other experimental variables were kept identical to those of our proposed method. This approach established a more rigorous baseline and ultimately yielded superior results compared to the original VAPO configuration.

\section{Different Segmentation Methods}
\begin{figure}[t]
    \centering
    % Tighten vertical spacing above/below figure if needed (adjust cautiously)
    \includegraphics[width=0.5\textwidth]{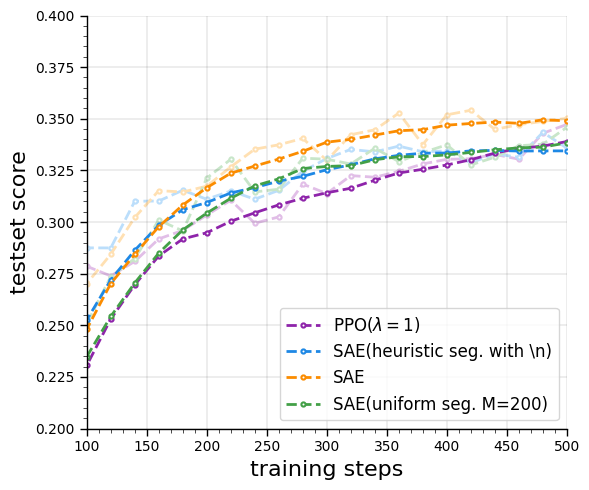}
    \caption{SAE variants with different segmentation methods. SAE's probability-based segmentation methods achieves the best performance.}
    \label{fig:seg_method}
\end{figure}

A key innovation of SAE is its segmentation method, which is based on the model's intrinsic uncertainty by leveraging features from its output tokens. To isolate and evaluate the effectiveness of this specific component, we conducted an ablation study on the Qwen3-4B-base model. 
This study compared two alternative segmentation strategies against SAE's proposed method: 
(1)Uniform Segmentation: The sequence was divided into fixed-length segments of M=200.
(2) Newline Character Segmentation: The sequence was segmented based on the occurrence of the newline character \textit{\textbackslash n}. For SAE, we set $p=0.5$.

The experimental results are presented in Figure~\ref{fig:seg_method}. Notably, all variants of SAE demonstrated a faster increase in test scores compared to the PPO ($\lambda$=1) baseline. This finding underscores the fundamental importance of the segmentation introduced by SAE.
Furthermore, a direct comparison of the segmentation methods reveals that SAE's probability-based segmentation outperforms the two other variants. This indicates that leveraging the model's own uncertainty for segmentation is a more effective strategy.

\section{Average Segmentation Length for Different Segmentation Threshold p}
\begin{figure}[t]
    \centering
    % Tighten vertical spacing above/below figure if needed (adjust cautiously)
    \includegraphics[width=0.5\textwidth]{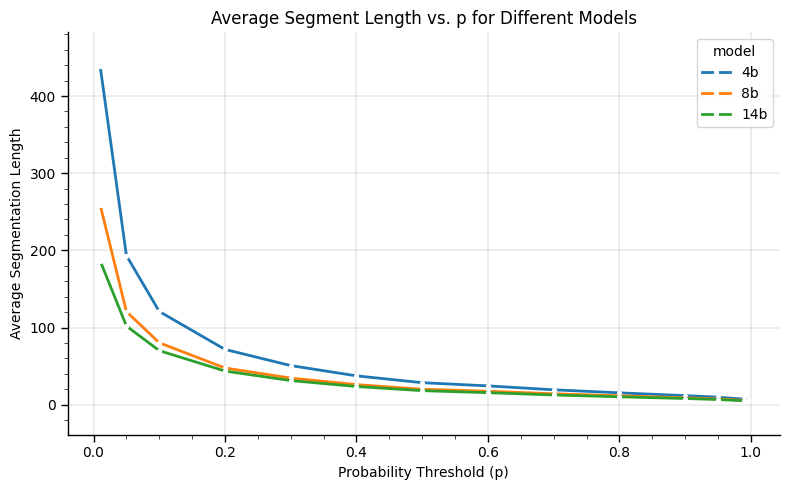}
    \caption{Average segmentation length for different segmentation threshold $p$. Bigger model yields smaller average segmentation length for a given $p$.}
    \label{fig:seg_len}
\end{figure}

In Figure~\ref{fig:seg_len}, we visualize the average segmentation length for different $p$ (defined in Eq~\ref{eq:boundary_function}). We can see that bigger model generally yields smaller average segmentation length for a given $p$, and the segmentation length grows fastly when $p\rightarrow 0$.

}

\end{document}